%% file: main.tex
\documentclass[final,12pt]{colt2026} 

\title[$\kappa$-Explorer: Active Model Estimation]{$\kappa$-Explorer: A Unified Framework for Active Model Estimation in MDPs
}
\usepackage{times}
\usepackage{booktabs}
\usepackage{subcaption}
\usepackage{algorithm,algorithmicx,algpseudocode}
\usepackage{multirow}
\usepackage{cleveref}
\newtheorem{fact}[theorem]{Fact}

\coltauthor{%
 \Name{Xihe Gu} \Email{x9gu@ucsd.edu}\\
 \addr Department of Electrical and Computer Engineering \\
University of California, San Diego
 \AND
 \Name{Urbashi Mitra} \Email{ubli@usc.edu}\\
 \addr Department of Electrical and Computer Engineering \\
University of Southern California
 \AND
 \Name{Tara Javidi} \Email{tjavidi@ucsd.edu}\\
 \addr Department of Electrical and Computer Engineering \\
University of California, San Diego
}

\begin{document}

\maketitle

\begin{abstract}%
In tabular Markov decision processes (MDPs) with perfect state observability, each trajectory provides active samples from the transition distributions conditioned on state–action pairs. Consequently, accurate model estimation depends on how the exploration policy allocates visitation frequencies in accordance with the intrinsic complexity of each transition distribution. Building on recent work on coverage-based exploration, we introduce a parameterized family of decomposable and concave objective functions $U_\kappa$ that explicitly incorporate both intrinsic estimation complexity and extrinsic visitation frequency. Moreover, the curvature $\kappa$ provides a unified treatment of various global objectives, such as the average-case and worst-case estimation error objectives.

Using the closed-form characterization of the gradient of $U_\kappa$, we propose $\kappa$-Explorer, an active exploration algorithm that performs Frank–Wolfe–style optimization over state-action occupancy measures. The diminishing-returns structure of $U_\kappa$ naturally prioritizes underexplored and high-variance transitions, while preserving smoothness properties that enable efficient optimization. We establish tight regret guarantees for $\kappa$-Explorer and further introduce a fully online and computationally efficient surrogate algorithm for practical use. Experiments on benchmark MDPs demonstrate that $\kappa$-Explorer provides superior performance compared to existing exploration strategies.

\end{abstract}

\begin{keywords}%
 active exploration, MDP model estimation, reward-free reinforcement learning, minimax optimization, occupancy measures
\end{keywords}

\section{Introduction}
\label{sec:introduction}

A central problem in reward-free reinforcement learning is to estimate the transition dynamics of an unknown Markov decision process (MDP) under a limited sampling budget.
In tabular MDPs, the finite set of state--action pairs can exhibit markedly different levels of
intrinsic stochasticity: some transitions are nearly deterministic and can be
estimated accurately with few samples, while others spread probability mass over
many successor states and require substantially more data to achieve comparable
accuracy.
As a result, when the goal is to estimate all transition distributions uniformly
well, an effective exploration strategy must allocate samples in a manner that
accounts for these heterogeneous estimation complexities.

Two canonical objectives are commonly studied in this setting.
The first aims to minimize the \emph{average} estimation error across all state--action pairs~\citep{tarbouriech2019active, tarbouriech2020active,de2024global}, thereby capturing overall estimation performance.
The second seeks to minimize the \emph{worst-case (minimax)} estimation error~\citep{shekhar2020adaptive, shekhar2021adaptive, tarbouriech2020active, al2023active, halim2025fairness}, which provides stronger uniform guarantees and is often more relevant for downstream planning and robustness.
The worst-case formulation has been studied in simpler settings, such as the multi-armed bandit setting~\citep{shekhar2020adaptive, shekhar2021adaptive}.
However, extending this minimax objective to MDPs is fundamentally more challenging due to its non-smooth structure and the coupling induced by transition dynamics.

Reward-free exploration and transition model estimation have been studied extensively~\citep{araya2011active,jin2020reward,menard2021fast, kaufmann2021adaptive, kaelbling1993learning,singh2009rewards,singh2010intrinsically,zheng2018learning, tang2017exploration, liu2021unsupervised}.
A prominent line of work encourages broad exploration by maximizing the entropy of state or trajectory distributions. For example, \citet{hazan2019provably} propose an algorithm with provable convergence guarantees based on state-entropy maximization, and \citet{tiapkin2023fast} extend this idea to trajectory entropy.
\citet{tarbouriech2020active} develop a more instance-dependent perspective by studying transition estimation directly.
Their algorithm, FW-MODEST, applies the Frank--Wolfe (FW) method~\citep{frank1956algorithm} in order to optimize a concave
upper bound on the estimation error, adaptively allocating samples according to
local uncertainty.
To handle the inherent non-smoothness of the minimax estimation error objective, they employ a log-sum-exp smoothing surrogate.
While such smoothing is a natural and widely used strategy~\citep{wagenmaker2022instance, wang2021online}, this particular
heuristic does not provide explicit guarantees for the original
worst-case estimation error and introduces a bias that scales with the size of the state--action space.
Related weighted-entropy-based heuristics similarly emphasize high-variance transitions
but lack provable minimax guarantees.
As a result, despite the widespread use of smooth surrogate objectives in practice~\citep{vamplew2017softmax, xiao2023sample, shalev2011shareboost}, the minimax transition estimation problem still lacks a principled and
unified optimization framework with theoretical guarantees.


We take a different approach.
Rather than introducing an ad-hoc smoothing heuristic, we build on a recent framework of \citet{gu2026relative}, which proposes a unified family of
occupancy-based objectives $\{U_\kappa\}$ that smoothly interpolates between
average-case and worst-case exploration criteria.
The concavity of this family captures the diminishing returns in exploration: additional visits to already well-explored state–action pairs yield progressively smaller marginal benefits. 
As a result, the $\{U_\kappa\}$ objectives provide a principled parameterization of
the trade-off between average-case and worst-case performance, while remaining compatible with standard gradient-based optimization methods~\citep{eysenbach2018diversity, pathak2017curiosity, yarats2021reinforcement}.

Specifically, this paper makes the following contributions:
\begin{itemize}
    \item We adapt the unified coverage objective family $\{U_\kappa\}$ to transition model estimation in MDPs by incorporating intrinsic estimation complexity, in a setting where both the transition dynamics and the complexity weights are unknown and must be learned online.
    \item We analyze the structure of $U_\kappa$ and show that it provides a smooth interpolation between average-case estimation objectives ($\kappa=2$) and worst-case (minimax) objectives ($\kappa\to\infty$).
    \item We derive an active exploration algorithm, \emph{$\kappa$-Explorer}, via FW optimization over occupancy measures, and establish regret guarantees.
    \item We introduce a fully online and computationally efficient heuristic algorithm that approximates $\kappa$-Explorer and demonstrate its superior empirical performance on benchmark MDPs.
\end{itemize}

\section{Problem Formulation}
\input{ProblemFormulation-old}

\input{MotivationExample}

\input{Kappa-Explore.tex}

\section{Proposed Fully Online and Efficient Heuristic}
To address the computational and online-deployment considerations, we propose \Cref{alg:general-dp} as a computationally efficient
instantiation of \Cref{alg:FW-general}.
Algorithm~\ref{alg:general-dp} replaces the abstract optimistic planning oracle
with a standard dynamic programming step applied to the empirical transition
model, and adopts a fully online update scheme by setting the episode length to
$\tau_m = 1$.

\begin{algorithm}[h]
\caption{$\kappa$-Explorer-DP}
\label{alg:general-dp}
\KwIn{Curvature $\kappa$, confidence level $\delta$, discount factor $\gamma$}
\KwOut{Empirical estimate $\widehat{P}_{n}$}

Initialize $T_{s,a}(0)=0$ and $\widehat{P}_0(\cdot \mid s, a) = \frac{1}{S}$ for all $(s,a)$\;

\For{$t = 0,1,\ldots,n-1$}{
    Update $\widehat{c}_{s, a}(t)$ and compute upper confidence bound $\widehat{c}^{\,\kappa+}_{s, a}(t)$ with confidence $\delta$\; \\

    DP step: solve for value functions
    $V_t(s)
    =
    \max_{a\in\mathcal{A}}
    \Biggl\{
    \frac{\widehat{c}^{\,\kappa+}_{s,a}(t)}{(T_{s,a}^+(t))^\kappa}
    + \gamma
    \sum_{s'} \widehat{P}_t(s'\mid s,a)\, V_t(s')
    \Biggr\}$\;


    Take action $A_t \in 
    \arg\max_{a\in\mathcal{A}}
    \Biggl\{
    \frac{\widehat{c}^{\,\kappa+}_{S_t,a}(t)}{(T_{S_t,a}^+(t))^\kappa}
    +
    \sum_{s'} \widehat{P}_t(s'\mid S_t,a)\, V_t(s')
    \Biggr\} $\\ 
    
    Observe $S_{t+1} \sim P( \cdot \mid S_t,A_t)$ \\

    Update counts $T_{s,a}(t+1)$ and empirical model $\widehat{P}_{t+1}$ according to \eqref{eq:p_hat_def}\;
}

\Return{$\widehat{P}_{n}$}
\end{algorithm}

At each time step, the algorithm selects an action by solving a planning problem
on the empirical MDP that maximizes the accumulated estimated intrinsic reward
$\left(\frac{c_{s,a}(t)}{\widehat{d}_{s,a}(t)}\right)^{\kappa}$.
Although our theoretical analysis employs increasing episode lengths
(e.g., $\tau_m = \tau_1 m^2$) to establish convergence guarantees, longer episodes may
cause the reward signal to become outdated as empirical visitation frequencies
evolve.
By recomputing the policy at every step, the online variant ensures that the
intrinsic reward remains up to date, resulting in improved sample efficiency and
more accurate allocation decisions.
Accordingly, Algorithm~\ref{alg:general-dp} can be viewed as an approximate realization of the linear oracle step in \Cref{alg:FW-general}, executed with respect to the empirical transition model rather than an explicit confidence set.
Despite these simplifications, \Cref{alg:general-dp} preserves the core design principles of \Cref{alg:FW-general}, including complexity-aware allocation via $\kappa$-powered weighting and adaptive exploration driven by uncertainty. Depending on the problem structure and available computational resources, the DP step admits different implementations.

\subsection{Implementation Variants of $\kappa$-Explorer}
\label{ssec:imple_vary_algo}
$\kappa$-Explorer optimizes the $U_\kappa$ objective via a planning subroutine that
selects actions according to estimated future exploration gain.
Depending on the computational budget, this planning step can be solved exactly
or approximated using truncated lookahead.
We consider the following variants, parameterized by the planning horizon $H$.
\begin{description}
    \item[Full planning ($H=\infty$):] The planning subproblem is solved using value iteration, yielding a globally informed exploration policy. This variant is referred to simply as $\kappa$-Explorer in \Cref{sec:experiments}.
    \item[Truncated planning ($H=1$):] A fully myopic approximation that greedily selects the action with the largest current exploration deficit, i.e., $A_t = \arg\max_{a \in \mathcal{A}} \frac{\widehat{c}^{\kappa,+}_{S_t,a}(t)}{(T_{s,a}^+(t))^\kappa}$.
    \item[Truncated planning ($H=2$):] A one-step lookahead variant that accounts for immediate future exploration gain, i.e., 
    $A_t = \arg\max_{a \in \mathcal{A}} \left[
    \frac{\widehat{c}^{\kappa,+}_{S_t,a}(t)}{(T_{s,a}^+(t))^\kappa}
    + \gamma\sum_{s' \in \mathcal{S}} \widehat{P}_t(s' \mid S_t,a)
    \max_{a' \in \mathcal{A}}
    \frac{\widehat{c}^{\kappa,+}_{s',a'}(t)}{(T_{s,a}^+(t))^\kappa}
    \right]$.
\end{description}
These truncated variants trade off planning depth for computational efficiency and serve as ablations that illustrate the role of long-horizon planning in $\kappa$-Explorer.
In the next section, we empirically demonstrate that these fully online variants achieve strong performance in practice.

\section{Experiments}
\label{sec:experiments}

\subsection{Discretized MDP Environments}

\paragraph{Pendulum}
We evaluate the algorithms on a discretized version of the classic Pendulum control task from the Gymnasium control suite introduced by~\cite{brockman2016openai}.
The continuous two-dimensional state, consisting of the pendulum angle
$\theta \in [-\pi,\pi]$ and angular velocity $v \in [-8,8]$, is uniformly discretized into
10 bins per dimension, yielding a finite state space of size $|\mathcal{S}| = 100$.
The continuous torque is discretized into five action,
$\mathcal{A}=\{-2,-1,0,1,2\}$.
A small stochastic perturbation is added to the dynamics to induce probabilistic
state transitions, resulting in a finite MDP with $|\mathcal{S}|\times|\mathcal{A}| = 500$
distinct transition distributions that must be estimated.

\paragraph{Mountain Car}
We additionally evaluate the algorithms on a discretized version of the classic Mountain Car environment.
The continuous two-dimensional state, consisting of the car position and velocity,
is uniformly discretized into 13 bins per dimension, yielding a finite state space
of size $|\mathcal{S}| = 13^2 = 169$.
The action space consists of three discrete actions corresponding to pushing the car
left, applying no force, or pushing right.
This results in a finite MDP with $|\mathcal{S}|\times|\mathcal{A}| = 169 \times 3 = 507$
distinct transition distributions that must be estimated.



\subsection{Comparison Policies}

We compare $\kappa$-Explorer against a diverse set of baseline exploration
policies that represent common approaches in active exploration and estimation.

\begin{description}
    \item[Random: ] At each time step, an action is selected uniformly at random, i.e., $A_t \sim \mathcal{U}(\mathcal{A})$.
    \item[MaxEnt~\citep{hazan2019provably}:  ] Learns a stationary policy that maximizes the entropy of the induced state visitation distribution, encouraging uniform exploration of the state space without explicitly accounting for transition complexity.
    \item[Weighted-MaxEnt~\citep{tarbouriech2020active}: ] A complexity-aware extension of MaxEnt that incorporates intrinsic complexity by reweighting the entropy objective to prioritize difficult-to-estimate transitions.
    \item[State Marginal Matching (SMM)~\citep{lee2019efficient}: ] Learns a policy whose occupancy measure matches a target distribution. We adapt SMM to transition estimation by choosing targets that emphasize underexplored state--action pairs.
\end{description}





\subsection{Evaluation Metrics}

We evaluate estimation performance at the level of individual
state--action pairs using the metric
$$\mathcal{L}(s,a;n)
:=
n\,\mathbb{E}\!\left[
D_{\ell_2}\!\left(P(\cdot\mid s,a), \widehat{P}_n(\cdot\mid s,a)\right)
\right]
=
\frac{c_{s,a}}{T_{s,a}/n},$$
where $T_{s,a}$ denotes the number of visits to state--action pair $(s,a)$ and $n$
is the total interaction budget.
The quantity $\mathcal{L}(s,a;n)$ represents a normalized estimation error for the
transition distribution at $(s,a)$; smaller values indicate better sample
allocation and more accurate estimation.

If a policy fails to visit the state--action pair $(s, a)$, the metric
$\mathcal{L}(s,a;n)$ becomes infinite.
Such trials are treated as failure cases.
We report the \texttt{Failure Rate}, defined as the number of trials (out of $10$) in which the policy incurs an infinite value for at least one state--action pair.
It provides an important measure of robustness.
A high failure rate indicates that a policy may fail to reach certain state--action pairs under limited exploration budgets, even if its estimation performance is competitive in trials where full coverage is achieved.

For trials with finite values only, we report two aggregate performance measures.
\begin{itemize}
    \item \texttt{Worst-case} performance captures the quality of the most poorly estimated transition.
    $$\mathcal{L}_{\mathrm{Worst}}(n)
    :=
    \max_{s,a} \mathcal{L}(s,a;n)
    =
    \max_{s,a} \frac{c_{s,a}}{T_{s,a}/n}$$
    \item  \texttt{Avg-case} performance reflects average estimation quality across all state--action pairs.
    $$\mathcal{L}_{\mathrm{Avg}}(n)
    :=
    \frac{1}{SA}\sum_{s,a} \mathcal{L}(s,a;n)
    =
    \frac{1}{SA}\sum_{s,a} \frac{c_{s,a}}{T_{s,a}/n}$$
\end{itemize}

\subsection{Results}

\Cref{tab:compPolicy} summarizes the performance of all candidate policies on the discretized Pendulum and Mountain Car estimation tasks.
For the Pendulum environment, $\kappa$-Explorer (H=$\infty$) achieves the strongest performance.
Among the baselines, Weighted-MaxEnt attains competitive performance when it
successfully explores the full state--action space, but exhibits a high failure rate.
SMM, $\kappa$-Explorer (H=2), Random, and $\kappa$-Explorer (H=1) follow in descending order of performance.
Notably, $\kappa$-Explorer (H=2) serves as a lightweight approximation that achieves competitive results while remaining computationally efficient.
In contrast, $\kappa$-Explorer (H=1) performs poorly. As a purely myopic strategy, it overemphasizes immediate exploration gain and can fail to reach globally underexplored (and high-complexity) state--action pairs.
Finally, MaxEnt exhibits weak performance, as it prioritizes uniform visitation rather than accounting for the intrinsic complexity $c_{s,a}$ of different state--action pairs.

\begin{table*}[h]
\centering
\caption{Comparison of policies on discretized Pendulum and Mountain Car environments; results averaged over 10 trials. }
\label{tab:compPolicy}

\begin{tabular}{lccc|ccc}
\toprule
& \multicolumn{3}{c}{Pendulum ($n=10^5$)} & \multicolumn{3}{c}{Mountain Car ($n=10^6$)} \\
\cmidrule(lr){2-4} \cmidrule(lr){5-7}
Policy
& Failure$\downarrow$ & $\mathcal{L}_{\mathrm{Worst}}\downarrow$ & $\mathcal{L}_{\mathrm{Avg}}\downarrow$
& Failure$\downarrow$ & $\mathcal{L}_{\mathrm{Worst}}\downarrow$ & $\mathcal{L}_{\mathrm{Avg}}\downarrow$ \\
\midrule
$\kappa$-Explorer ($\kappa=10$)
& 0 & \textbf{598.7} & 186.8
& 0 & \textbf{149740.9} & 3798.8 \\

$\kappa$-Explorer ($\kappa=1$)
& 0 & 708.5 & \textbf{184.1}
& 10\% & 180623.1 & \textbf{3451.2} \\

$\kappa$-Explorer ($\kappa=1$, H=2)
& 0 & 917.9 & 190.9
& 0 & 204536.0 & 3763.0 \\

$\kappa$-Explorer ($\kappa=1$, H=1)
& 0 & 1480.6 & 221.1
& 0 & 275295.6 & 4140.3 \\

Weighted\_MaxEnt
& 30\% & 751.9 & 189.1
& 100\% & -- & -- \\

SMM
& 0 & 838.9 & 187.2
& 100\% & -- & -- \\

MaxEnt
& 0 & 4429.5 & 648.2
& 40\% & 507852.4 & 8420.1 \\

Random
& 0 & 1469.1 & 246.4
& 20\% & 241597.5 & 3571.7 \\

\bottomrule
\end{tabular}
\end{table*}

For the Mountain Car environment, $\kappa$-Explorer continues to exhibit strong performance, demonstrating robustness across different dynamics.
In contrast to the Pendulum environment, Mountain Car poses a greater challenge for SMM and Weighted-MaxEnt, both incur a $100\%$ failure rate due to their inability to reliably reach all state--action pairs under limited exploration budgets.

Overall, $\kappa$-Explorer demonstrates both efficiency and robustness in exploring MDP dynamics.
Larger values of $\kappa$ yield stronger worst-case performance by amplifying the emphasis on the most underexplored state--action pairs with large
$c_{s,a}/T_{s,a}$, whereas smaller values of $\kappa$ favor improved average-case performance.
This trade-off is consistent with our analysis of the $U_\kappa$ objective for different choices of $\kappa$ in Subsection~\ref{ssec:U_kappa_objectives}.
Across different environments, the method consistently exhibits low failure rates while achieving performance superior to other exploration policies.

\section{Conclusion}

We study active exploration for transition estimation in MDPs through the lens of complexity-aware sample allocation.
Building on the unified coverage framework of \citet{gu2026relative}, we adapt the family of objectives $\{U_\kappa\}$ to transition estimation by incorporating the intrinsic statistical complexity of transition distributions.
This yields a principled, parameterized formulation that unifies average-case and worst-case estimation within a single framework.
Beyond providing a tunable interpolation, our framework reveals the trade-off between average-case and worst-case objectives.
Average-case estimation favors balanced, efficiency-driven allocation, whereas worst-case estimation prioritizes protecting the most complex state--action pairs and induces highly uneven sampling. The $\{U_\kappa\}$ family makes this transition explicit and continuous.

Within this framework, we propose $\kappa$-Explorer, an active exploration policy that maximizes $U_\kappa$ over state--action occupancy measures.
Its effectiveness stems from the diminishing-returns structure of $U_\kappa$, which naturally prioritizes underexplored and high-variance transitions.
We establish regret bounds for $\kappa$-Explorer and further introduce a fully online, computationally efficient surrogate algorithm that preserves the core design principles in practice.

Experiments on discretized Pendulum and Mountain Car benchmarks demonstrate that
$\kappa$-Explorer achieves low failure rates and strong performance compared to
existing exploration strategies.
Overall, this work demonstrates how unified estimation objectives reveal the structural connection between average-case and worst-case estimation, while enabling exploration algorithms that are both theoretically grounded and empirically effective.

\acks{This work was supported by ONR, the NSF TILOS AI Institute, and the UCSD Center for Machine Intelligence, Computing, and Security (MICS).}

\bibliography{main}

\newpage
\appendix

\section{Proof of Lemma~\ref{lma:l2_expected_error}} 
\label{appendix:proof_of_lem_expected_l2dis}
\begin{proof}

When $T_{s,a}(n)\ge 1$, the expected estimation error of the empirical transition
estimator under the squared $\ell_2$ distance takes the form
\begin{align*}
&\quad \mathbb{E}\!\left[
D_{\ell_2}\!\left(P(\cdot\mid s,a), \widehat{P}_n(\cdot\mid s,a)\right) \mid T_{s,a}(n)
\right] \\
&= 
\sum_{s'} \mathrm{Var}\!\left(\widehat{P}_n(s'\mid s,a)\right) \\
&= 
\sum_{s'} \frac{P(s'\mid s,a)\bigl(1-P(s'\mid s,a)\bigr)}{T_{s,a}(n)} \\
&= 
\frac{ 1 - \sum_{s'} \bigl(P(s'\mid s,a)\bigr)^2 }{T_{s,a}(n)}.
\end{align*}

Under an ergodic policy, the weak law of large numbers implies that the empirical
visitation frequency converges to the stationary occupancy measure:
\[
\frac{T_{s,a}(n)}{n} \xrightarrow[n\to\infty]{} d_{s,a}.
\]
Consequently, for sufficiently large \(n\), the expected squared
$\ell_2$ estimation error admits the approximation
\[
\mathbb{E}\!\left[
D_{\ell_2}\!\left(P(\cdot\mid s,a), \widehat{P}_n(\cdot\mid s,a)\right)
\right]
=
\frac{1}{n}\mathbb{E}[\frac{ 1 - \sum_{s'} \bigl(P(s'\mid s,a)\bigr)^2 }{T_{s,a}(n)/n}]
\;\approx\;
\frac{1}{n}\,
\frac{1 - \sum_{s'} P(s'\mid s,a)^2}{d_{s,a}} .
\]
    
\end{proof}

\section{Proof of \Cref{thm:general_regret_bound}}
\label{appendix: proof_thm_general_regret_bound}
\subsection{Preliminaries: Smoothed Optimization}
Recall the definition of $\mathcal{D}_{\eta }^{(p)}$ from \eqref{eq:contrainedD_def} that for any $\eta \in(0,\frac{1}{2SA})$, the constrained set
\[
\mathcal{D}_{\eta }^{(p)}
:=
\Bigl\{
d \in \mathcal{D}^{(p)} :
d_{s,a} \ge 2\eta ,
\ \forall (s,a)\in\mathcal{S}\times\mathcal{A}
\Bigr\}.
\]
On this restricted domain, the objective $U_\kappa$ enjoys standard smoothness
properties, as stated in the following lemma.

\begin{lemma}
\label{lem:smooth_Ukappa}
The function $U_\kappa$ is $C_{\eta }$-smooth on
$\mathcal{D}_{\eta }^{(p)}$, i.e.,
\[
\|\nabla U_\kappa(d)-\nabla U_\kappa(d')\|_2
\le
C_{\eta }\,\|d-d'\|_2,
\qquad
\forall d,d' \in \mathcal{D}_{\eta }^{(p)}.
\]
The smoothness constant is given by
\[
C_{\eta }
=
\frac{\kappa\, c_{\max}^{\kappa}}{2^{\kappa+1}\,\eta ^{\kappa+1}},
\qquad
c_{\max} := \max_{s,a} c_{s,a}.
\]
\end{lemma}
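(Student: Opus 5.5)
The plan is to exploit the decomposable structure of $U_\kappa$ together with the closed-form gradient stated earlier in the paper. Since $U_\kappa(d)=\sum_{s,a} f_{s,a}(d_{s,a})$ is a sum of one-dimensional concave terms, its gradient is coordinatewise. I expect the component to be
\[
\bigl(\nabla U_\kappa(d)\bigr)_{s,a} \;=\; g_{s,a}(d_{s,a}) \;=\; \frac{c_{s,a}^{\kappa}}{d_{s,a}^{\kappa}},
\]
up to the normalization used in the definition of $U_\kappa$. This is exactly the intrinsic reward $\widehat{c}^{\,\kappa}/T^{\kappa}$ appearing in \Cref{alg:general-dp}. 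The Hessian of $U_\kappa$ is therefore diagonal, and Lipschitz continuity of $\nabla U_\kappa$ reduces to Lipschitz continuity of each scalar map $x\mapsto c_{s,a}^\kappa x^{-\kappa}$ on $[2\eta,\infty)$. For $d\in\mathcal{D}^{(p)}$ we also have $x\le 1$, though this upper bound is not needed.

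\textbf{Scalar bound.} Differentiate once more:
\[
|g_{s,a}'(x)| \;=\; \kappa\, c_{s,a}^{\kappa}\, x^{-(\kappa+1)} .
\]
This is decreasing in $x$, so on $x\ge 2\eta$ it is bounded by
\[
\frac{\kappa\, c_{\max}^{\kappa}}{(2\eta)^{\kappa+1}} \;=\; \frac{\kappa\, c_{\max}^{\kappa}}{2^{\kappa+1}\eta^{\kappa+1}} \;=\; C_\eta .
\]
By the one-dimensional mean value theorem applied to $d_{s,a},d'_{s,a}\in[2\eta,1]$ (an interval, so the segment stays in the region where the bound holds), we get $|g_{s,a}(d_{s,a})-g_{s,a}(d'_{s,a})|\le C_\eta |d_{s,a}-d'_{s,a}|$.

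\textbf{Aggregation.} Square these coordinatewise inequalities and sum over $(s,a)$:
\[
\|\nabla U_\kappa(d)-\nabla U_\kappa(d')\|_2^2 \;\le\; C_\eta^2 \sum_{s,a}|d_{s,a}-d'_{s,a}|^2 .
\]
Taking square roots gives the claim. Equivalently, one can note that the Hessian is diagonal with entries bounded in absolute value by $C_\eta$ on the convex set $\mathcal{D}^{(p)}_\eta$, so its operator norm is at most $C_\eta$. The coordinatewise route avoids needing convexity of $\mathcal{D}^{(p)}_\eta$ at all, only the box constraint $d_{s,a}\ge 2\eta$.

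\textbf{Main obstacle.} The only delicate point is matching the exact normalization of $U_\kappa$ as defined earlier, which may carry a prefactor such as $1/(\kappa-1)$ or use $c_{s,a}$ inside the power differently. This determines whether the derivative of the gradient component is exactly $\kappa c^\kappa x^{-\kappa-1}$. I would first write out $\partial^2 U_\kappa/\partial d_{s,a}^2$ from the definition and confirm that the constant collapses to $\kappa c_{s,a}^\kappa d_{s,a}^{-\kappa-1}$. If an extra constant factor appears, it propagates multiplicatively into $C_\eta$, and the statement's constant indicates the intended normalization.
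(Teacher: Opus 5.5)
Your proposal is correct and follows essentially the same route as the paper. Both compute the diagonal Hessian with entries $-\kappa c_{s,a}^{\kappa} d_{s,a}^{-(\kappa+1)}$, bound its entries by $\kappa c_{\max}^{\kappa}/(2\eta)^{\kappa+1}$ using $d_{s,a}\ge 2\eta$, and conclude. Your coordinatewise mean-value aggregation is a slightly more self-contained version of the paper's last step, which cites the general fact that a bounded Hessian implies a Lipschitz gradient and so implicitly uses convexity of $\mathcal{D}_\eta^{(p)}$. The normalization you worried about is the right one: the paper's $\frac{c^{\kappa}}{1-\kappa}d^{1-\kappa}$ (and $c\log d$ at $\kappa=1$) gives the constant exactly, with no extra factor.
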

\begin{proof}
We compute the Hessian of $U_\kappa$ and bound its operator norm on $\mathcal{D}_{\eta }^{(p)}$.
Since $U_\kappa$ is separable across coordinates $d_{s, a}$, its Hessian is diagonal.

For each $(s,a)$,
\[
\frac{\partial U_\kappa(d)}{\partial d_{s, a}}
=\frac{c_{s,a}^{\kappa}}{1-\kappa}(1-\kappa)d_{s, a}^{-\kappa}
= c_{s,a}^{\kappa}d_{s, a}^{-\kappa},
\]
and
\[
\frac{\partial^2 U_\kappa(d)}{\partial d_{s, a}^2}
= -\kappa\,c_{s,a}^{\kappa}\,d_{s, a}^{-(\kappa+1)}.
\]
Thus
\[
\nabla^2 U_\kappa(d)
=
\mathrm{diag}\!\left(-\kappa\,c_{s,a}^{\kappa}\,d_{s, a}^{-(\kappa+1)}\right),
\quad\text{so}\quad
\|\nabla^2 U_\kappa(d)\|_{\mathrm{op}}
=
\max_{s,a}\kappa\,c_{s,a}^{\kappa}\,d_{s, a}^{-(\kappa+1)}.
\]
Using again $d_{s, a}\ge 2\eta $ for $d\in\mathcal{D}_{\eta }^{(p)}$ yields
\[
\|\nabla^2 U_\kappa(d)\|_{\mathrm{op}}
\le
\max_{s,a}\frac{\kappa\,c_{s,a}^{\kappa}}{(2\eta )^{\kappa+1}}
=
\frac{\kappa\,c_{\max}^{\kappa}}{(2\eta )^{\kappa+1}}.
\]

Finally, for any twice differentiable function, $C$-smoothness w.r.t.\ $\|\cdot\|_2$ is implied by

$\sup_{d\in\mathcal{D}_{\eta }^{(p)}}\|\nabla^2 U_\kappa(d)\|_{\mathrm{op}}\le C$.
Applying the bounds above completes the proof.
\end{proof}
Lemma~\ref{lem:smooth_Ukappa} guarantees that, under a mild interior constraint,
the gradient of $U_\kappa$ is Lipschitz continuous.

Furthermore, according to Lemma 5 in \cite{tarbouriech2019active}, we have an important
property that for any $\delta \in (0, 1)$, there exists an episode $m_\delta$ such that for all episodes $m$ succeeding it (and including it), we have with probability at least $1-\delta$,
\begin{equation}
\label{eq:d_hat_lbound}
    \widehat{d}_{s, a}(t_m) \ge \eta , \quad \forall (s, a) \in \mathcal{S}\times \mathcal{A}, \quad \forall m \ge m_\delta.
\end{equation}

\subsection{Preliminaries: Bound on Empirical Occupancy}
\begin{lemma}
\label{lma:bound_empirical_occupancy}
Let $\pi$ be a stationary policy inducing an ergodic and reversible chain $P_\pi$ with
spectral gap $\gamma_\pi$ and stationary state occupancy $\psi_\pi(s)$. 
Let the stationary state--action distribution be
$\psi_\pi(s,a) := \psi_\pi(s)\,\pi(a\mid s)$.
We denote by $\gamma_{\min} = \min_{\pi \in \Pi} \gamma_\pi$ the smallest spectral
gap across all policies.
Let $\widehat\psi_n(s,a) := \frac{T_n(s,a)}{n}$ denote the empirical occupancy,
where $T_n(s,a)$ denotes the number of visits to the state--action pair $(s,a)$
during the first $n$ steps when executing policy $\pi$.
Fix $\delta\in(0,1)$. With probability at least $1-\delta$, the following holds for all $(s,a)\in\mathcal S\times\mathcal A$,
\begin{equation}\label{eq:occ_conc}
\big|\widehat\psi_n(s,a)-\psi_\pi(s,a)\big|
\;\le\;
\frac{\sqrt{2B/\gamma_{\min}}+\sqrt{\ln(4SA/\delta)/2}}{\sqrt{n}}
\;+\;
\frac{20B}{\gamma_{\min}n},
\end{equation}
where $S:=|\mathcal S|$, $A:=|\mathcal A|$, and
\begin{equation}\label{eq:B_def}
B \;=\; \log\!\left(\frac{S^2A^{S+1}}{\delta/2}\sqrt{\frac{1}{\eta }}\right).
\end{equation}
\end{lemma}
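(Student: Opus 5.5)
We plan to split the deviation into a bias part and a fluctuation part, and to bound each by a standard concentration result for functionals of a reversible chain. Fix $(s,a)$ and write $f(x,b)=\mathbf 1\{(x,b)=(s,a)\}$. Then $\widehat\psi_n(s,a)=\frac1n\sum_{t=1}^n f(S_t,A_t)$ and $\psi_\pi(s,a)=\mathbb E_{\psi_\pi}[f]$. Since the pair chain $(S_t,A_t)$ is driven by $P_\pi$ and the action is drawn freshly from $\pi(\cdot\mid S_t)$, we decompose
\[
\widehat\psi_n(s,a)-\psi_\pi(s,a)
=\underbrace{\Bigl(\tfrac1n\textstyle\sum_t f(S_t,A_t)-\tfrac1n\sum_t\mathbb E[f(S_t,A_t)]\Bigr)}_{\text{fluctuation}}
+\underbrace{\Bigl(\tfrac1n\textstyle\sum_t\mathbb E[f(S_t,A_t)]-\psi_\pi(s,a)\Bigr)}_{\text{bias}}.
\]

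\textbf{Fluctuation.} We would apply a Hoeffding/Bernstein-type inequality for reversible Markov chains with spectral gap $\gamma_\pi\ge\gamma_{\min}$, for instance Paulin's inequality or the version used by \citet{tarbouriech2019active}. This gives a deviation of order $\sqrt{\ln(1/\delta')/(\gamma_\pi n)}$. Alternatively, we can use a Doob martingale decomposition through the Poisson equation $g-P_\pi g=f-\psi_\pi(f)$, whose solution satisfies $\|g\|_\infty\lesssim 1/\gamma_\pi$. The martingale part is then controlled by Azuma, giving the term $\sqrt{\ln(4SA/\delta)/(2n)}$. The telescoping boundary terms contribute $O(1/(\gamma_{\min} n))$, which is absorbed into $20B/(\gamma_{\min}n)$.

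\textbf{Bias.} The bias term is controlled by the mixing bound $|\mathbb E f(S_t,A_t)-\psi_\pi(s,a)|\le \sqrt{1/\psi_{\min}}\,(1-\gamma_\pi)^{t}$ for reversible chains. Here the initial-distribution penalty $\sqrt{1/\eta}$ enters because $\psi_\pi\ge\eta$ on the constrained domain $\mathcal D^{(p)}_\eta$. Summing the geometric series gives $O\bigl(\log(\sqrt{1/\eta})/(\gamma_{\min}n)\bigr)$, again within the $B/(\gamma_{\min} n)$ term.

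\textbf{Uniformity and the constant $B$.} To obtain the $\sqrt{2B/\gamma_{\min}}$ term we need a bound that holds simultaneously over all $(s,a)$ and over the policies the algorithm could use. We therefore take a union bound over the $SA$ pairs and over the $A^S$ deterministic stationary policies, plus an extra $S A$ factor from the per-pair/initial-state choices. This is what produces $\log(S^2A^{S+1}\sqrt{1/\eta}/(\delta/2))$. We allocate $\delta/2$ to this mixing-based event and $\delta/2$ to the Azuma event, the latter split as $\delta/(4SA)$ per pair, which yields $\ln(4SA/\delta)$.

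\textbf{Main obstacle.} We expect the hardest part to be matching exactly the displayed form: two separate square-root terms plus the constant $20$. Our approach is to invoke the concentration result of \citet{tarbouriech2019active} (their occupancy lemma), which has precisely this structure, and to check that its hypotheses hold: reversibility, spectral gap at least $\gamma_{\min}$, and the interior lower bound $\eta$ supplied by \eqref{eq:d_hat_lbound}. Combining the two events by a union bound then completes the argument.
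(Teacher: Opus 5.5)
\textbf{There is a genuine gap.} Your argument never separates the randomness of the state process from the randomness of the action draws, and the two square-root terms in the lemma come from exactly that separation.

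The hypotheses (reversibility, spectral gap $\gamma_\pi\ge\gamma_{\min}$) concern the state chain $P_\pi$. Your $f$, however, lives on the state--action chain, whose kernel $P(s'\mid s,a)\pi(a'\mid s')$ is in general not reversible: detailed balance would force $\psi_\pi(s)P(s'\mid s,a)$ to be independent of $a$ on the support of $\pi(\cdot\mid s)$. So Paulin's inequality cannot be applied to $\frac1n\sum_t f(S_t,A_t)$ as stated. The occupancy lemma of \citet{tarbouriech2019active}, in the form the paper uses it, bounds only state frequencies $T_n(s)/n$, and it does so by the single term $\sqrt{2B/(\gamma_{\min}n)}+20B/(\gamma_{\min}n)$. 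It does not have ``precisely this structure.''

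Your Poisson equation $g-P_\pi g=f-\psi_\pi(f)$ is not well-typed, since $P_\pi$ acts on functions of states. Even after fixing that, Azuma on the increments $g(S_{t+1})-P_\pi g(S_t)$, whose size $\|g\|_\infty$ can be of order $1/\gamma_\pi$, yields a bound of order $\gamma_\pi^{-1}\sqrt{\ln(1/\delta)/n}$. That is not the $\gamma$-free term $\sqrt{\ln(4SA/\delta)/(2n)}$. A smaller point: summing $\sqrt{1/\eta}\,(1-\gamma_\pi)^t$ gives $\sqrt{1/\eta}/(\gamma_\pi n)$, not a logarithm, unless you truncate the sum.

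\textbf{The missing idea, which the paper uses,} is to factor $\widehat\psi_n(s,a)=\widehat\psi_n(s)\,\widehat\pi_n(a\mid s)$ with $\widehat\pi_n(a\mid s)=T_n(s,a)/T_n(s)$. This gives $|\widehat\psi_n(s,a)-\psi_\pi(s,a)|\le\pi(a\mid s)\,|\widehat\psi_n(s)-\psi_\pi(s)|+\widehat\psi_n(s)\,|\widehat\pi_n(a\mid s)-\pi(a\mid s)|$.
\begin{itemize}
\item The first term involves only the reversible state chain. It is exactly what Lemma~4 of \citet{tarbouriech2019active} controls, and it is where $\gamma_{\min}$, $B$, $\sqrt{1/\eta}$, the $A^S$ factor and the constant $20$ come from.
\item The second term uses only that actions at $s$ are i.i.d.\ from $\pi(\cdot\mid s)$. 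Hoeffding gives $\widehat\psi_n(s)\sqrt{\ln(2/\delta)/(2T_n(s))}=\sqrt{\widehat\psi_n(s)\ln(2/\delta)/(2n)}\le\sqrt{\ln(2/\delta)/(2n)}$.
\end{itemize}
A union bound over the $SA$ pairs, replacing $\delta$ by $\delta/(SA)$ (which turns $SA^S$ into $S^2A^{S+1}$ inside $B$), finishes the proof.

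In your martingale language, this is the split $\mathbf 1\{(S_t,A_t)=(s,a)\}-\psi_\pi(s,a)=\mathbf 1\{S_t=s\}\bigl(\mathbf 1\{A_t=a\}-\pi(a\mid s)\bigr)+\pi(a\mid s)\bigl(\mathbf 1\{S_t=s\}-\psi_\pi(s)\bigr)$. The first sum is a martingale with increments in an interval of length at most $1$, so Azuma--Hoeffding legitimately yields the $\gamma$-free term; this even avoids the random sample size $T_n(s)$ in the paper's Hoeffding step. The second sum is $\pi(a\mid s)$ times the state-occupancy deviation. Your separate bias analysis and policy-level union bound then become unnecessary, because both are already inside the cited state-occupancy lemma.
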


\begin{proof}
Fix a stationary policy $\pi$.  
Define the empirical state and state--action occupancies as
\[
\widehat\psi_n(s) := \frac{T_n(s)}{n},
\qquad
\widehat\psi_n(s,a) := \frac{T_n(s,a)}{n},
\]
where $T_n(s) = \sum_{a \in \mathcal{A}}T_n(s, a)$.

From Lemma 4 in \cite{tarbouriech2019active}, we have that with probability at least $1-\delta$
\[
\big|\widehat\psi_n(s) - \psi_\pi(s)\big| \le f(n, \delta) = \sqrt{ \frac{2B}{\gamma_{\min} n} }
\;+\;
\frac{20B}{\gamma_{\min}n}, \quad \forall s\in \mathcal{S}.
\]

Using the identity
\[
\widehat\psi_n(s,a) = \widehat\psi_n(s)\,\widehat{\pi}_n(a\mid s),
\qquad
\widehat{\pi}_n(a\mid s) := \frac{T_n(s,a)}{T_n(s)} \;\; \text{(when $T_n(s)>0$)},
\]
we obtain
\begin{align*}
\big|\widehat\psi_n(s,a) - \psi_\pi(s,a)\big|
&=
\big|\widehat\psi_n(s)\widehat{\pi}_n(a\mid s) - \psi_\pi(s)\pi(a\mid s)\big| \\
&\le
\big|\widehat\psi_n(s) - \psi_\pi(s)\big|\,\pi(a\mid s)
+
\widehat\psi_n(s)\,\big|\widehat{\pi}_n(a\mid s) - \pi(a\mid s)\big|.
\end{align*}

The first term is controlled by the assumed state occupancy bound.
For the second term, the actions taken at
state $s$ are i.i.d.\ according to $\pi(\cdot\mid s)$, and Hoeffding's
inequality yields that for any $\delta\in(0,1)$,
\[
\Pr\!\left(
\big|\widehat{\pi}_n(a\mid s) - \pi(a\mid s)\big|
\ge
\sqrt{\frac{\ln(2/\delta)}{2T_n(s)}}
\right)
\le \delta.
\]

Combining the bounds, we conclude that with
probability at least $1-2\delta$,
\begin{align}\notag
    \big|\widehat\psi_n(s,a) - \psi_\pi(s,a)\big|
    &\;\le\;
    \pi(a\mid s)\,f(n, \delta)
    +
    \widehat\psi_n(s)\sqrt{\frac{\ln(2/\delta)}{2\,T_n(s)}} \\ \notag
    &\;\le\;
    f(n, \delta)
    +
    \widehat\psi_n(s)\sqrt{\frac{\ln(2/\delta)}{2\,n \widehat\psi_n(s)}} \\ \notag
    &=
    f(n, \delta)
    +
    \sqrt{\frac{\ln(2/\delta)\widehat\psi_n(s)}{2\,n }} \\ \notag
    &\;\le\;
    f(n, \delta)
    +
    \sqrt{\frac{\ln(2/\delta)}{2\,n }}
\end{align}

Hence, for a single state-action pair, with probability at least $1-\delta$,
\begin{align}
\notag
    \big|\widehat\psi_n(s,a) - \psi_\pi(s,a)\big|
    &\;\le\;
    \sqrt{ \frac{2B}{\gamma_{\min} n} }
\;+\;
\frac{20B}{\gamma_{\min}n}
    +
    \sqrt{\frac{\ln(4/\delta)}{2\,n }}\\ \notag
    &= \frac{\sqrt{2B/\gamma_{\min}} + \sqrt{\ln(4/\delta)/2}}{\sqrt{n}} + \frac{20B}{\gamma_{\min}n}
\end{align}
where
$$B = \log\!\left( \frac{SA^{S}}{\delta/2} \sqrt{\frac{1}{\eta }} \right).$$

We need to take a union bound over state action pairs which leads to $\delta = \frac{\delta'}{SA}$ in the high-probability guarantees.

Therefore, with probability at least $1-\delta$,
\[
    \big|\widehat\psi_n(s,a) - \psi_\pi(s,a)\big|
    \le \frac{\sqrt{2B/\gamma_{\min}} + \sqrt{\ln(4SA/\delta)/2}}{\sqrt{n}} + \frac{20B}{\gamma_{\min}n}, \quad
    \forall (s, a) \in \mathcal{S}\times\mathcal{A}
\]
where
$$B = \log\!\left( \frac{S^2A^{S+1}}{\delta/2} \sqrt{\frac{1}{\eta }} \right).$$
\end{proof}

\subsection{Core of Proof}
The proof is based on Appendix.D.2 in \cite{tarbouriech2019active}.
Denote the event described in Fact~\ref{fact:uniform_conf_P_c_kappa} by $\mathcal{Q}$. Specifically, fix $\delta\in(0,1)$. Let $\mathcal Q$ be the event under which, for all $t\ge 1$ and all $(s,a)\in\mathcal S\times\mathcal A$,
\begin{equation}
\notag
P(\cdot\mid s,a)\in \mathcal B_{s, a}(t)
\quad\text{and}\quad
c_{s,a}^\kappa \le \widehat c^{\kappa,+}_{s,a}(t).
\end{equation}
According to Fact~\ref{fact:uniform_conf_P_c_kappa}, we have $\Pr(\mathcal Q)\ge 1-\delta$.
In the following proof, we work on the event $\mathcal Q$.

\begin{proof}
We define the approximation error after episode $m$ (i.e., at time $t_{m+1}-1$) as
\[
\rho_{m+1}
:= U_\kappa(d_\kappa^\star) - U_\kappa(\widehat d_{m+1}),
\]
where $d_\kappa^\star$ denotes an optimal solution to the
$\kappa$-parameterized objective, and $\widehat d_{m+1}$ is the empirical occupancy after episode $m$.
Using the occupancy update defined below, this error can be rewritten as
\[
\rho_{m+1}
= U_\kappa(d_\kappa^\star)
- U_\kappa\!\left((1-\beta_m)\widehat d_m + \beta_m \widehat\psi_{m+1}\right).
\]

During episode $m$, let $\nu_{m+1}(s,a)$ denote the number of times action $a$ is
taken in state $s$ when executing policy $\pi_{m+1}$, and define the empirical
within-episode visitation frequency
\[
\widehat\psi_{m+1}(s,a) := \frac{\nu_{m+1}(s,a)}{\tau_m}.
\]
The empirical occupancy measure is then updated as a convex combination
\begin{equation}\label{eq:occ_update}
\widehat d(t_{m+1})
= \frac{\tau_m}{t_{m+1}-1}\,\widehat\psi_{m+1}
+ \frac{t_m-1}{t_{m+1}-1}\,\widehat d(t_m)
= (1-\beta_m)\widehat d_m + \beta_m \widehat\psi_{m+1},
\end{equation}
where
\[
\beta_m := \frac{\tau_m}{t_{m+1}-1}
\]
serves as the step size.
For notational convenience, we write $\widehat d_m := \widehat d(t_m)$.

Let $\psi^\star_{m+1} = \arg\max_{d \in \mathcal{D}_\eta^{(P)}} \big\langle \nabla U_\kappa(\widehat d_m), d \big\rangle $ be the state-action stationary distribution that ``exact” FW would return at episode m.
We have the following series of inequalities:
\[
\begin{aligned}
\rho_{m+1}
&\le 
U_\kappa(d_\kappa^\star) - U_\kappa(\widehat d_m) 
- \beta_m \langle \nabla U_\kappa(\widehat d_m),
    \widehat{\psi}_{m+1} - \widehat d_m \rangle
  + C_\eta \beta_m^2
\\[0.4em]
&=
U_\kappa(d_\kappa^\star) - U_\kappa(\widehat d_m) 
- \beta_m \langle \nabla U_\kappa(\widehat d_m),
    \psi^\star_{m+1} - \widehat d_m \rangle
  + C_\eta \beta_m^2
- \beta_m \langle \nabla U_\kappa(\widehat d_m),
    \widehat{\psi}_{m+1} - \psi^\star_{m+1} \rangle
\\[0.4em]
&\le 
U_\kappa(d_\kappa^\star) - U_\kappa(\widehat d_m) 
- \beta_m \langle \nabla U_\kappa(\widehat d_m),
    d_\kappa^\star - \widehat d_m \rangle
  + C_\eta \beta_m^2
- \beta_m \langle \nabla U_\kappa(\widehat d_m),
    \widehat{\psi}_{m+1} - \psi^\star_{m+1} \rangle
\\[0.4em]
&\le 
(1-\beta_m)\rho_m
+ C_\eta \beta_m^2
+ \beta_m
\underbrace{
   \langle \nabla U_\kappa(\widehat d_m),
      \psi^\star_{m+1} - \psi_{m+1}  \rangle
}_{\epsilon_{m+1}}
+ \beta_m
\underbrace{
   \langle \nabla U_\kappa(\widehat d_m),
      \psi_{m+1} - \widehat{\psi}_{m+1}  \rangle
}_{\Delta_{m+1}},
\end{aligned}
\]

where the first step follows from the $C_{\eta }$-smoothness of $U_\kappa$, 
the second inequality comes from the FW optimization step and the definition 
of $\psi_{m+1}$, which gives 
$\langle \nabla U_\kappa(\widehat d_m),\, 
\psi_{m+1} - \widehat d_m \rangle 
\ge 
\langle \nabla U_\kappa(\widehat d_m),\, 
d_\kappa^\star - \widehat d_m \rangle$,
the final step follows from the concavity of $U_\kappa$. 
This yields the core recursion:
\begin{equation}
\label{eq:algo2_core_recursion}
\rho_{m+1}
\;\le\;
(1-\beta_m)\rho_m
+ C_\eta \beta_m^2
+ \beta_m\Delta_{m+1}
+ \beta_m\epsilon_{m+1},
\end{equation}
where
\[
\Delta_{m+1}:=\big\langle \nabla U_\kappa(\widehat{d}_m),\,\psi_{m+1}-\widehat\psi_{m+1}\big\rangle,
\qquad
\epsilon_{m+1}:=\big\langle \nabla U_\kappa(\widehat{d}_m),\,\psi_{m+1}^\star-\psi_{m+1}\big\rangle.
\]
The term $\Delta_{m+1}$ is the \emph{tracking error} between the stationary occupancy $\psi_{m+1}$ and its empirical realization
$\widehat\psi_{m+1}$ over $\tau_m$ steps, while $\epsilon_{m+1}$ captures the \emph{optimization/estimation error} induced by
the optimistic planning step in \Cref{alg:FW-general} under unknown transition dynamics.

\paragraph{Bound $\Delta_{m+1}$:} 
For any $m \ge m_\delta$, we can write
\begin{equation} \notag
    \begin{split}
        \langle \nabla U_\kappa(\widehat d_m), &
              \psi_{m+1} - \widehat{\psi}_{m+1} \rangle \\
        &=
        \sum_{s, a}
        \big(\frac{c_{s, a}}
             {\widehat{d}_{s, a}(t_m)} \big)^\kappa
        \left( \psi_{m+1}(s,a) - \widehat{\psi}_{m+1}(s,a) \right) \\
        &\;\;\le\;\;
        SA
        \big(\frac{c_{\max}}
             {\eta } \big)^\kappa
        \left\|
        \psi_{m+1}(s, a) - 
         \frac{\nu_{m+1}(s, a)}{\tau_{m}}
        \right\|_{\infty}.
    \end{split}
\end{equation}

The last inequality comes from \eqref{eq:d_hat_lbound}. Furthermore, from Lemma~\ref{lma:bound_empirical_occupancy}, we have with probability at least $1-\delta$
$$\left\|
        \psi_{m+1}(s, a) - 
         \frac{\nu_{m+1}(s, a)}{\tau_{m}}
        \right\|_{\infty} \le \frac{\sqrt{2B/\gamma_{\min}} + \sqrt{\ln(4SA/\delta)/2}}{\sqrt{\tau_m}} + \frac{20B}{\gamma_{\min}\tau_m}.$$

Here 
\[
B = \log\!\left( \frac{S^2A^{S+1}}{\delta/2} \sqrt{\frac{1}{\eta }} \right).
\]
Hence we obtain the following bound on $\Delta_{m+1}$, for $m \ge m_\delta$ with probability at least $1-\delta$:

\[
\Delta_{m+1}
\;\le\;
\frac{c_1}{\sqrt{\tau_m}}
\;+\;
\frac{c_2}{\tau_m}
\qquad
\text{with}\quad
\begin{cases}
c_1 = SA
        \big(\frac{c_{\max}}
             {\eta } \big)^\kappa
       \big(\sqrt{2B/\gamma_{\min}} + \sqrt{\ln(4SA/\delta)/2}\big),\\[1.2em]
c_2 = SA
        \big(\frac{c_{\max}}
             {\eta } \big)^\kappa
       \dfrac{20B}{\gamma_{\min}}.
\end{cases}
\]

Thus, for all $m\ge m_\delta$, with probability at least $1-\delta$,
\begin{equation}
\label{eq:Delta_bound_step2}
\Delta_{m+1}
\le
\frac{c_1}{\sqrt{\tau_m}}+\frac{c_2}{\tau_m},
\end{equation}
where
\[
c_1
:=
SA\,\eta^{-\kappa}\Big(\sqrt{2B/\gamma_{\min}}+\sqrt{\ln(4SA/\delta)/2}\Big),
\qquad
c_2
:=
SA\,\eta^{-\kappa}\frac{20B}{\gamma_{\min}}.
\]

\paragraph{Bound $\epsilon_{m+1}$: }
Recall
\[
\epsilon_{m+1}
=
\big\langle \nabla U_\kappa(\widehat{d}_m),\,\psi^\star_{m+1}-\psi_{m+1}\big\rangle,
\qquad
\psi^\star_{m+1}\in\arg\max_{d\in\mathcal{D}_\eta^{(P)}}\big\langle \nabla U_\kappa(\widehat{d}_m),\,d\big\rangle.
\]
\Cref{alg:FW-general} does not optimize the true linear objective $\langle \nabla U_\kappa(\widehat{d}_m),d\rangle$ directly.
Instead, it uses the optimistic weights
\begin{equation}
\label{eq:wm_def}
w_m(s,a)
:=
\frac{\widehat c^{\kappa,+}_{s,a}(t_m)}{(\widehat{d}_{s,a}(t_m))^\kappa},
\end{equation}
and computes $(\widetilde P_m,\psi_{m+1})$ such that
\begin{equation}
\label{eq:algo2_oracle_step}
(\widetilde P_m,\psi_{m+1})
\in
\arg\max_{\tilde p\in\mathcal B_{t_m},\ d\in\mathcal{D}_\eta^{(\tilde p)}}
\langle w_m,d\rangle,
\end{equation}
where $\mathcal B_{t_m}:=\prod_{(s,a)}\mathcal B_{t_m}(s,a)$.

\smallskip
On the confidence event $\mathcal Q$, we have $P(\cdot\mid s,a)\in\mathcal B_{t_m}(s,a)$ for all $(s,a)$,
hence $P\in\mathcal B_{t_m}$ and thus $\mathcal{D}_\eta^{(P)}\subseteq \bigcup_{\tilde p\in\mathcal B_{t_m}}\mathcal{D}_\eta^{(\tilde p)}$.
In particular, $\psi^\star_{m+1}\in\mathcal{D}_\eta^{(P)}$ is feasible for \eqref{eq:algo2_oracle_step}, so the optimality of
$\psi_{m+1}$ in \eqref{eq:algo2_oracle_step} implies
\begin{equation}
\label{eq:optimism_innerprod}
\langle w_m,\psi_{m+1}\rangle \;\ge\; \langle w_m,\psi^\star_{m+1}\rangle.
\end{equation}

\smallskip
We add and subtract $\langle w_m,\cdot\rangle$:
\begin{align}
\epsilon_{m+1}
&=
\langle \nabla U_\kappa(\widehat{d}_m),\psi^\star_{m+1}\rangle
-
\langle \nabla U_\kappa(\widehat{d}_m),\psi_{m+1}\rangle
\nonumber\\
&=
\underbrace{\langle w_m,\psi^\star_{m+1}\rangle-\langle w_m,\psi_{m+1}\rangle}_{\le 0\ \text{by }\eqref{eq:optimism_innerprod}}
+
\langle \nabla U_\kappa(\widehat{d}_m)-w_m,\psi^\star_{m+1}\rangle
+
\langle w_m-\nabla U_\kappa(\widehat{d}_m),\psi_{m+1}\rangle
\nonumber\\
&\le
\langle w_m-\nabla U_\kappa(\widehat{d}_m),\psi^\star_{m+1}\rangle
+
\langle w_m-\nabla U_\kappa(\widehat{d}_m),\psi_{m+1}\rangle.
\label{eq:eps_reduce_two_terms}
\end{align}
For $m\ge m_\delta$, we have $\widehat{d}_{s,a}(t_m)\ge\eta$ for all $(s,a)$.
Moreover,
\[
\nabla U_\kappa(\widehat{d}_m)(s,a)
=
\left(\frac{c_{s,a}}{\widehat{d}_{s,a}(t_m)}\right)^\kappa
=
\frac{c_{s,a}^\kappa}{(\widehat{d}_{s,a}(t_m))^\kappa}.
\]
By definition of $w_m$ in \eqref{eq:wm_def},
\[
w_m(s,a)-\nabla U_\kappa(\widehat{d}_m)(s,a)
=
\frac{\widehat c^{\kappa,+}_{s,a}(t_m)-c_{s,a}^\kappa}{(\widehat{d}_{s,a}(t_m))^\kappa}
\le
\eta^{-\kappa}\big(\widehat c^{\kappa,+}_{s,a}(t_m)-c_{s,a}^\kappa\big).
\]
Since both $\psi_{m+1}$ and $\psi^\star_{m+1}$ are distributions on $\mathcal S\times\mathcal A$,
plugging into \eqref{eq:eps_reduce_two_terms} yields
\begin{equation}
\label{eq:eps_gap_sum}
\epsilon_{m+1}
\le
2\eta^{-\kappa}\sum_{s,a}\big(\widehat c^{\kappa,+}_{s,a}(t_m)-c_{s,a}^\kappa\big).
\end{equation}

\smallskip
Recall that $\widehat c^{\kappa,+}_{s,a}(t)=\min\{1,(\widehat c_{s,a}(t)+e_{s,a}(t))^\kappa\}$ (and equals $1$ when $T_{s,a}(t)=0$),
where
\[
e_{s,a}(t)
=
S\sqrt{\frac{\log(2S/\delta_t)}{2T_{s,a}(t)}}.
\]
On the event $\mathcal Q$, we have $c_{s,a}\le \widehat c_{s,a}(t)+e_{s,a}(t)$ for all $(s,a)$ and $t$.
Since $x\mapsto x^\kappa$ is nondecreasing on $[0,\infty)$ and $c_{s,a}\in[0,1]$, we obtain
\[
0\le \widehat c^{\kappa,+}_{s,a}(t)-c_{s,a}^\kappa
\le (\widehat c_{s,a}(t)+e_{s,a}(t))^\kappa-c_{s,a}^\kappa.
\]
By the mean value theorem, there exists $\xi\in[c_{s,a},\widehat c_{s,a}(t)+e_{s,a}(t)]\subseteq[0,1]$ such that
\[
(\widehat c_{s,a}(t)+e_{s,a}(t))^\kappa-c_{s,a}^\kappa
=
\kappa\,\xi^{\kappa-1}\big(\widehat c_{s,a}(t)+e_{s,a}(t)-c_{s,a}\big)
\le
\kappa\,\big(\widehat c_{s,a}(t)+e_{s,a}(t)-c_{s,a}\big)
\le
\kappa\,e_{s,a}(t).
\]
Therefore, for all $(s,a)$ and $t$,
\begin{equation}
\label{eq:c_kappa_gap_le_e}
\widehat c^{\kappa,+}_{s,a}(t)-c_{s,a}^\kappa \le \kappa\,e_{s,a}(t).
\end{equation}
Combining \eqref{eq:eps_gap_sum} and \eqref{eq:c_kappa_gap_le_e} yields
\begin{equation}
\label{eq:eps_final_step3}
\epsilon_{m+1}
\le
2\kappa\,\eta^{-\kappa}\sum_{s,a} e_{s,a}(t_m).
\end{equation}

\smallskip
From \eqref{eq:d_hat_lbound}, $\widehat{d}_m\in\mathcal{D}_\eta^{(P)}$ for $m\ge m_\delta$, thus we have for all $(s,a)$,
\[
\widehat{d}_{s,a}(t_m)=\frac{T_{s,a}(t_m)}{t_m}\ge \eta
\qquad\Longrightarrow\qquad
T_{s,a}(t_m)\ge \eta\,t_m.
\]
Using the definition $e_{s,a}(t)=S\sqrt{\log(2S/\delta_t)/(2T_{s,a}(t))}$, we obtain
\[
e_{s,a}(t_m)
\le
S\sqrt{\frac{\log(2S/\delta_{t_m})}{2\eta\,t_m}}.
\]
Summing over $(s,a)$ yields
\begin{equation}
\label{eq:sum_e_bound}
\sum_{s,a} e_{s,a}(t_m)
\le
SA\cdot S\sqrt{\frac{\log(2S/\delta_{t_m})}{2\eta\,t_m}}
=
\frac{\tilde d_1}{\sqrt{t_m}},
\end{equation}
where $\tilde d_1 := SA\,S\sqrt{\log(2S/\delta_{t_m})/(2\eta)}$.

\paragraph{Plugging bounds into the recursion: }
For $m\ge m_\delta$, combining the core recursion \eqref{eq:algo2_core_recursion} with the tracking bound
\eqref{eq:Delta_bound_step2} and the estimation bound \eqref{eq:eps_final_step3} gives, on the event $\mathcal Q$,
\begin{equation}
\begin{split}
\label{eq:algo2_recursion_simpler}
\rho_{m+1}
&\le 
(1-\beta_m)\rho_m
+ C_\eta\beta_m^2
+ \beta_m\Big(\frac{c_1}{\sqrt{\tau_m}}+\frac{c_2}{\tau_m}\Big)
+ \beta_m\Big(2\kappa\,\eta^{-\kappa}\sum_{s,a} e_{s,a}(t_m)\Big)\\
&\le
(1-\beta_m)\rho_m
+ C_\eta\beta_m^2
+ \beta_m\Big(\frac{c_1}{\sqrt{\tau_m}}+\frac{c_2}{\tau_m}\Big)
+ \beta_m\frac{d_1}{\sqrt{t_m}},
\end{split}
\end{equation}
where $d_1 := 2\kappa\,\eta^{-\kappa}\tilde d_1$.

\smallskip
\noindent\textbf{Specializing to $\tau_m=\tau_1 m^2$.}
Assume $\tau_m=\tau_1 m^2$, so that
\[
t_m=\tau_1\sum_{i=0}^{m-1} i^2 + 1
=
\tau_1\frac{(m-1)m(2m-1)}{6}+1,
\qquad
\beta_m=\frac{\tau_m}{t_{m+1}-1}
=
\frac{6m^2}{m(m+1)(2m+1)}
\in\Big[\frac{1}{m},\frac{3}{m}\Big].
\]
Moreover, since $t_m=\Theta(m^3)$, there exists a constant $c_t>0$ (depending on $\tau_1$) such that
$t_m\ge c_t m^3$ for all $m\ge 1$, hence $1/\sqrt{t_m}\le c_t^{-1/2}m^{-3/2}$.

Using these relations, for all $m\ge m_\delta$ we have
\[
\beta_m\frac{c_1}{\sqrt{\tau_m}}
\le
\frac{3}{m}\cdot \frac{c_1}{\sqrt{\tau_1}m}
=
\frac{3c_1}{\sqrt{\tau_1}}\cdot \frac{1}{m^2},
\quad
\beta_m\frac{c_2}{\tau_m}
\le
\frac{3}{m}\cdot \frac{c_2}{\tau_1 m^2}
=
\frac{3c_2}{\tau_1}\cdot \frac{1}{m^3},
\]
and
\[
\beta_m\frac{d_1}{\sqrt{t_m}}
\le
\frac{3}{m}\cdot \frac{d_1}{\sqrt{c_t}\,m^{3/2}}
=
\frac{3d_1}{\sqrt{c_t}}\cdot \frac{1}{m^{5/2}}.
\]
In particular, the last two terms are $o(1/m^2)$. Therefore, there exists a constant $b_\delta = 9C_\eta + \frac{3c_1}{\sqrt{\tau_1}} + \frac{3c_2}{\tau_1} + \frac{3d_1}{\sqrt{c_t}}>0$ such that for all
$m\ge m_\delta$,
\begin{equation}
\label{eq:algo2_recursion_final_form}
\rho_{m+1}
\le
\Big(1-\frac{1}{m}\Big)\rho_m
+\frac{b_\delta}{m^2}.
\end{equation}

We pick an integer $q$ such that $\rho_q \ge 0$ is satisfied.\footnote{Immediate induction guarantees the positivity of the sequence $(u_n)$.}
We define the sequence $(u_n)_{n \ge q}$ as $u_q = \rho_q$ and
\[
u_{n+1}
=
\left(1 - \frac{1}{n}\right)u_n
+ \frac{b_\delta}{n^2}.
\]

By rearranging, we get
\[
(n+1) u_{n+1} - n u_n
=
-\,\frac{u_n}{n}
+
\frac{b_\delta(n+1)}{n^2}
\;\le\;
\frac{b_\delta(n+1)}{n^2}.
\]

By telescoping, we obtain
\[
n u_n - q u_q
\;\le\;
2 b_\delta \sum_{i=q}^{n-1} \frac{1}{i}
\;\le\;
2 b_\delta \log\!\Bigl(\frac{n-1}{q-1}\Bigr).
\]
From $t_m = \tau_1 \frac{(m-1)m(2m-1)}{6} + 1$ and $2m^3 - 3m^2 + m < 2m^3$ for $m \ge 1$, we get
$$m \ge \big(\frac{3(t_m-1)}{\tau_1}\big)^{1/3}$$
Let $M \ge m_\delta$.  
We thus have with probability at least $1 -\delta$
\[
\rho_M
\le
\frac{q\rho_q + 2b_\delta \log M}{M}
\le
\frac{\tau_1^{1/3}}{3^{1/3}(t_M-1)^{1/3}}
\left(
q\rho_q + 2b_\delta \log M 
\right).
\]

We conclude the desired high-probability bound $\rho_M = \widetilde{O}(1/t_M^{1/3})$.

\end{proof}

\end{document}

%% file: ProblemFormulation-old.tex
We consider a controlled Markov chain with a finite state space $\mathcal{S}$ of size $S$ and a finite action space $\mathcal{A}$ of size $A$.
At each time step $t$, the agent observes a state $s_t \in \mathcal{S}$, selects an action $a_t \in \mathcal{A}$, and transitions to the next state $s_{t+1}$ according to an unknown transition kernel $P(\cdot \mid s_t,a_t)$.
Given a sampling budget of $n$ transitions, our goal is to select actions $a_1, a_2, \cdots, a_n$ so as to minimize the discrepancy between an empirical estimate of the transition kernel and the true transition dynamics. 

A policy $\pi$ specifies how actions are selected. While this might in general depend on the entire interaction history, in this work we restrict attention to \emph{stationary randomized policies} that map states to distributions over actions,
$\pi : \mathcal{S} \to \Delta(\mathcal{A})$,
where $\Delta(\mathcal{A})$ denotes the probability simplex over $\mathcal{A}$.
We denote the set of all stationary randomized policies by~$\Pi$. 


Formally, the objective depends on three interconnected quantities:
\begin{description}
    \item \textbf{Per-State-Action Pair: Intrinsic Estimation Complexity:}
    The transition dynamics are governed by the transition distribution $P(\cdot \mid s, a)$. Stochastic transition distributions with higher variance are inherently more difficult to estimate.  This difficulty is intrinsic to the distribution $P(\cdot \mid s,a)$ itself and does not depend on the particular policy used to collect data. Accordingly, we associate each state–action pair with an \emph{intrinsic complexity} parameter $c_{s,a}$ that quantifies the statistical complexity of estimating its transition distribution.

    \item \textbf{Per-State-Action Pair: Extrinsic Visitation Frequency:} The quality of any data-driven estimator at a given state-action pair $(s,a)$ also depends on how frequently it is visited. This dependence is extrinsically controlled by the policy $\pi$ through the \emph{induced occupancy measure} $d^\pi_{s,a}:=\mathbb{E} \left[\frac{T_{s,a}(t)}{t}\right]$, which describes the expected visitation frequency under policy $\pi$.

    \item \textbf{Global Model Estimation Objective:} So far we have focused on two factors contributing to the quality of the estimated transition kernel for a given state-action pair. 
    Let $\mathbf{c} := (c_{s,a})_{(s,a)\in\mathcal S\times\mathcal A}$ denote the intrinsic complexity vector, and let $\mathbf{d}^\pi := (d^\pi_{s,a})_{(s,a)\in\mathcal{S}\times\mathcal A}$ denote the state–action occupancy measure induced by policy $\pi$.
    The overall estimation objective aggregates these per–state–action quantities through a \emph{global objective function} $V(\cdot,\cdot)$, which encodes how intrinsic complexity and visitation frequency are combined. Accordingly, the exploration problem can be expressed as finding a policy whose induced occupancy measure optimizes this global criterion:
    $${\textbf{P}}:  \qquad \pi^\star \in \arg\max_{\pi \in \Pi} V(\mathbf{c}, \mathbf{d}^\pi).$$


\end{description}
Next, we present a motivating scenario that illustrates possible choices of the empirical transition estimator $\widehat P$, the intrinsic complexity $c_{s,a}$, and the global objective function $V$.


%% file: MotivationExample.tex
\subsection{Motivating Scenario}\label{Sec:MotivatingExample}

We now introduce two motivating scenarios that make explicit the choices of the empirical transition estimator $\widehat{P}$, the intrinsic complexity measure $c_{s, a}$, and two sample global objective functions $V$.

\paragraph{Empirical Distribution}
Estimating an unknown distribution from samples is a classical problem in statistics, with a wide range of parametric and non-parametric approaches~\citep{Alon, braess2002achieve, chen1999empirical, ferguson1973bayesian, gopalan2015thompson, bishop2006pattern}. 
Our framework is agnostic to the specific choice of estimator. In this work, we focus on the empirical (plug-in) estimator, which corresponds to the maximum likelihood estimator for multinomial distributions and admits well-understood concentration properties~\citep{weissman2003inequalities, cover1999elements}.
Let $\widehat P^{\pi}_t(\cdot\mid s,a)$ denote the empirical transition estimate after $t$ steps generated by following policy $\pi$.
For each tuple $(s, a, s') \in \mathcal{S} \times \mathcal{A} \times \mathcal{S}$, define the transition counts
\(
T_{s,a,s'}(t)
:=
\bigl| \{\, 
\tau < t : S_\tau = s,\ A_\tau = a,\ S_{\tau+1} = s' 
\,\} \bigr| ,
\)
and let $T_{s,a}(t) := \sum_{s'} T_{s,a,s'}(t)$ denote the total number of visits to state-action pair $(s,a)$ up to time $t$. 
The empirical transition kernel\footnote{Note that when $T_{s,a}(t)=0$, the uniform distribution serves as a convenient placeholder.} is defined as
\begin{equation}
\label{eq:p_hat_def}
\widehat{P_t} (s'\mid s,a)
:=
\begin{cases}
\frac{1}{S}, & \text{if } T_{s,a}(t)=0,\\[6pt]
\frac{T_{s,a,s'}(t)}{T_{s,a}(t)}, & \text{if } T_{s,a}(t)\ge 1.
\end{cases}
\end{equation}

\paragraph{Mean Square Estimation Error}
The performance of a distribution estimator can be quantified by the expected
statistical distance between the estimated kernel and the truth transition kernel,
$\mathbb{E}\!\left[ D(P, \widehat P) \right]$.
One common choice is the squared $\ell_2$ distance. For two distributions $Q, Q' \in \Delta(\mathcal S)$, the squared $\ell_2$ distance is defined as
$D_{\ell_2}(Q, Q')
\;:=\;
\sum_{s' \in \mathcal S} \bigl(Q(s') - Q'(s')\bigr)^2.$

\begin{lemma}\label{lma:l2_expected_error}
    Consider the empirical estimator~\eqref{eq:p_hat_def} and the squared $\ell_2$ estimation error criterion, for any given state-action pair $(s, a)$ under an ergodic policy $\pi$ and sufficiently large $n$, 
    $$n\mathbb{E}\!\left[
D_{\ell_2}\!\left(P(\cdot\mid s,a), \widehat{P}_n(\cdot\mid s,a)\right)\right]
\approx \frac{ 1 - \sum_{s'} \bigl(P(s'\mid s,a)\bigr)^2 }{d_{s, a}}.
$$
\end{lemma}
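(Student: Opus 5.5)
The plan is to condition on the visit count $T_{s,a}(n)$ and reduce the error to a multinomial variance computation. After that, a law-of-large-numbers argument converts the count into the occupancy measure $d_{s,a}$.

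\emph{Step 1: exact conditional error.} Fix $(s,a)$ and condition on $T_{s,a}(n)=k\ge 1$. By the Markov property, the $k$ successor states recorded at the visits to $(s,a)$ are i.i.d.\ draws from $P(\cdot\mid s,a)$. This holds whenever the number of visits is determined by a stopping-time argument. The cleanest way to see it is the standard ``stack of samples'' construction: pre-draw an i.i.d.\ sequence of successors for each $(s,a)$ and consume them in order.

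Under this construction, $\widehat P_n(s'\mid s,a)$ is the empirical frequency of $s'$ among $k$ i.i.d.\ categorical samples. It is therefore unbiased for $P(s'\mid s,a)$, with variance $P(s'\mid s,a)(1-P(s'\mid s,a))/k$. Summing over $s'$ gives
\[
\mathbb{E}\bigl[D_{\ell_2}(P(\cdot\mid s,a),\widehat P_n(\cdot\mid s,a))\mid T_{s,a}(n)=k\bigr]=\frac{1-\sum_{s'}P(s'\mid s,a)^2}{k}=\frac{c_{s,a}}{k},
\]
where we use $\sum_{s'}P(s')=1$.

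\emph{Step 2: multiply by $n$ and pass to the occupancy.} Multiplying by $n$ gives
\[
n\,\mathbb{E}[D_{\ell_2}\mid T_{s,a}(n)]=\frac{c_{s,a}}{T_{s,a}(n)/n}.
\]
Under an ergodic stationary policy $\pi$, the ergodic theorem (weak law of large numbers for Markov chains) gives $T_{s,a}(n)/n\to d_{s,a}>0$ in probability. Ergodicity ensures $d_{s,a}>0$ for every pair visited with positive probability under $\pi$, i.e.\ those with $\pi(a\mid s)>0$. By the continuous mapping theorem, $c_{s,a}/(T_{s,a}(n)/n)\to c_{s,a}/d_{s,a}$ in probability. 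Taking expectations then yields the stated approximation $\approx c_{s,a}/d_{s,a}$.

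\emph{Main obstacle.} The delicate point is interchanging limit and expectation, together with handling the event $T_{s,a}(n)=0$, where the error is bounded by $2$ but $1/T$ blows up.

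The ``$\approx$'' in the statement lets us sidestep full rigor. I would restrict to the event $\{T_{s,a}(n)\ge 1\}$, whose complement has exponentially small probability under ergodicity. On this event, Lemma~\ref{lma:bound_empirical_occupancy} shows that $T_{s,a}(n)/n\ge d_{s,a}/2$ with high probability. On that event the integrand is uniformly bounded by $2c_{s,a}/d_{s,a}$, which gives dominated convergence.

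On the residual low-probability event, $n\,D_{\ell_2}\le 2n$, while the probability decays faster than $1/n$. Hence this contribution vanishes. This makes the approximation an actual limit $n\mathbb{E}[D_{\ell_2}]\to c_{s,a}/d_{s,a}$. The paper's informal statement only requires asserting it for large $n$.
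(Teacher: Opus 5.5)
Your route is the same as the paper's: condition on $T_{s,a}(n)$, compute a multinomial variance, then use $T_{s,a}(n)/n\to d_{s,a}$. However, the justification you give for Step 1 is false, and your upgrade to a genuine limit depends on it.

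Conditional on $T_{s,a}(n)=k$, the $k$ recorded successors are \emph{not} i.i.d.\ from $P(\cdot\mid s,a)$. In the stack construction, $T_{s,a}(n)$ is a stopping time of the stack $X_1,X_2,\dots$. Whether a $(k+1)$-st visit occurs before time $n$ depends on $X_1,\dots,X_k$, because those successors drive the return times to $s$. Conditioning on a stopping time distorts the samples; optional stopping only controls unconditional moments.

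Here is a concrete case. Take a single action $a$, two states, $P(1\mid 1,a)=p\in(0,1)$, $P(2\mid 1,a)=1-p$, $P(1\mid 2,a)=1$, $S_0=1$ and $n=2$. Then $\{T_{1,a}(2)=1\}=\{S_1=2\}$. On this event $\widehat P_2(\cdot\mid 1,a)$ is the point mass at $2$, so $D_{\ell_2}=2p^2$, whereas your formula gives $c_{1,a}/1=2p(1-p)$. Unbiasedness fails too: $\mathbb{E}[\widehat P_2(1\mid 1,a)]=p(1+p)/2\neq p$.

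Hence $\mathbb{E}[nD_{\ell_2}\mid T_{s,a}(n)]\neq c_{s,a}/(T_{s,a}(n)/n)$ in general. Your dominated-convergence argument on $\{T_{s,a}(n)\ge d_{s,a}n/2\}$ passes through this identity via the tower property, so it does not establish the limit. The paper's proof asserts the same conditional identity, which is why its conclusion is only a heuristic $\approx$.

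The repair is to never condition on the count. Write $p_{s'}:=P(s'\mid s,a)$ and $M_k(s'):=\sum_{i\le k}(\mathbf{1}\{X_i=s'\}-p_{s'})$. The process $M_k(s')^2-k\,p_{s'}(1-p_{s'})$ is a martingale in the stack filtration, and $T:=T_{s,a}(n)\le n$ is a bounded stopping time. Wald's second identity, summed over $s'$, then gives the exact unconditional identity
\[
\mathbb{E}\bigl[T^2\,D_{\ell_2}\bigl(P(\cdot\mid s,a),\widehat P_n(\cdot\mid s,a)\bigr)\bigr]=c_{s,a}\,\mathbb{E}[T].
\]
This holds also on $\{T=0\}$, where both $T^2D_{\ell_2}$ and $M_T$ vanish.

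Now let $G_\epsilon:=\{|T/n-d_{s,a}|\le\epsilon\}$ and argue as follows:
\begin{itemize}
\item On $G_\epsilon$, sandwich $nD_{\ell_2}=(n/T^2)\,T^2D_{\ell_2}$ between $T^2D_{\ell_2}/((d_{s,a}+\epsilon)^2n)$ and $T^2D_{\ell_2}/((d_{s,a}-\epsilon)^2n)$.
\item Off $G_\epsilon$, use $nD_{\ell_2}\le 2n$ and $T^2D_{\ell_2}\le 2n^2$, together with $\Pr(G_\epsilon^c)=o(1/n)$ from the exponential concentration you already invoke.
\item Use $\mathbb{E}[T]/n\to d_{s,a}$, which follows by bounded convergence.
\end{itemize}
This gives $c_{s,a}d_{s,a}/(d_{s,a}+\epsilon)^2\le\liminf n\,\mathbb{E}[D_{\ell_2}]\le\limsup n\,\mathbb{E}[D_{\ell_2}]\le c_{s,a}d_{s,a}/(d_{s,a}-\epsilon)^2$. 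Letting $\epsilon\to 0$ yields the limit $c_{s,a}/d_{s,a}$ that you were aiming for.
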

The proof of Lemma~\ref{lma:l2_expected_error} is given in appendix~\ref{appendix:proof_of_lem_expected_l2dis}.


\paragraph{Global Objective Function}
We consider two objectives commonly studied in the literature that capture the global estimation error across all state-action pairs~\citep{ tarbouriech2020active, de2024global, shekhar2020adaptive, al2023active, halim2025fairness}.
\begin{align}
& \textbf{P}_{\mathrm{Avg}}:  \qquad
\pi \in \arg\min_{\pi \in \Pi}
\frac{1}{SA}
\sum_{(s,a)\in\mathcal{S}\times\mathcal{A}}
 \mathbb{E}\!\left[
D\!\left(
P(\cdot\mid s,a),
\widehat P^{\pi}_n(\cdot\mid s,a)
\right)
\right], \label{eq:avg_problem_T_distance} 
\\ 
&
{\textbf{P}_\mathrm{Worst}}:  \qquad \pi \in \arg\min_{\pi \in \Pi} \;
    \max_{(s,a)\in\mathcal{S}\times\mathcal{A}}
    \mathbb{E}\!\left[
    D\!\left(P(\cdot\mid s,a),\widehat P^{\pi}_n(\cdot\mid s,a)\right)
    \right].
\label{eq:minimax_problem_T_distance}
\end{align}
Specializing to the empirical plug-in estimator and applying Lemma~\ref{lma:l2_expected_error}, we define the intrinsic complexity as
$c_{s,a} := 1 - \sum_{s'} \bigl(P(s'\mid s,a)\bigr)^2.$
Under this definition, the global objective functions take the following forms:
\begin{itemize}
    \item Average-case objective:
    $V_{\mathrm{Avg}}(\mathbf{c},\mathbf{d}^\pi)
    :=
    -\frac{1}{SA}
    \sum_{(s,a)\in\mathcal{S}\times\mathcal{A}}
    \frac{c_{s,a}}{d^\pi_{s,a}} $.

    \item Worst-case (minimax) objective: 
    $V_{\mathrm{Worst}}(\mathbf{c},\mathbf{d}^\pi)
    :=
    -\max_{(s,a)\in\mathcal{S}\times\mathcal{A}}
    \frac{c_{s,a}}{d^\pi_{s,a}} $.
\end{itemize}

In other words, model estimation in tabular MDPs can be formulated as the problem~$\textbf{P}$ under different choices of the distribution estimator, estimation error metric, and global objective.

\medskip
In the next section, we develop a unified solution by introducing a class of smooth global objectives $U_\kappa$, for which simple variants of the Frank–Wolfe algorithm can be used to derive effective exploration policies.






%% file: Kappa-Explore.tex
\section{Proposed Method: $\kappa$-Explorer}

We specialize our global objective function (and suppress the dependence on $c$):
\[
V(\mathbf{c},\mathbf{d}^\pi) = U_\kappa(\mathbf{d}^\pi).
\]
where $U_\kappa$ is a family of smooth concave functionals with curvature parameter $\kappa$.
 We first show that the curvature parameter $\kappa$ provides a continuous control over the objective, smoothly interpolating between average-case and worst-case estimation criteria.
 We then show that the gradient of the function admits a simple closed-form expression which we can use to establish its Lipschitz continuity.  
 By utilizing the Frank-Wolfe algorithm, a generalized gradient-descent algorithm for smooth convex functions over general constraints, we obtain a general class of algorithms with regret guarantees. 
 We then use the Lipschitz continuity of the gradients to obtain tight regret bounds.

\input{SmoothObjective}

\input{ProposedMethod-1}

\input{Convergence}

%% file: SmoothObjective.tex
\subsection{Smooth Objective Function $U_\kappa$ with Diminishing Returns}
\label{ssec:U_kappa_objectives}

Following the framework of \citet{gu2026relative}, we define a parameterized
family of exploration objectives $U_\kappa$ over state–action occupancy measures,
where the coverage weights are chosen to reflect the intrinsic estimation
complexity $c_{s,a}$. Specifically, for $\kappa \ge 1$, 

Define
$U_\kappa : (0,1)^{S\times A} \to \mathbb{R}$
by:
\begin{equation}
\label{eq:Ukappa_definition}
U_\kappa(\mathbf{d})
=
\begin{cases}
\displaystyle
\sum_{s,a} c_{s,a}\,\log d_{s,a}, & \kappa = 1, \\[1em]
\displaystyle
\sum_{s,a} \frac{c_{s,a}^\kappa}{1-\kappa}\, d_{s,a}^{\,1-\kappa}, & \kappa > 1 .
\end{cases}
\end{equation}
We observe that the objective function $U_\kappa$ is a decomposable concave function and naturally captures
diminishing returns in exploration: repeatedly visiting already well-explored
state--action pairs yields progressively smaller marginal benefits.
Furthermore, the parameter $\kappa$ controls the mismatch between the intrinsic and extrinsic factors, thereby shaping the estimation error. As $\kappa$ increases (note that $U_\kappa(\mathbf{d})\leq 0$), the objective $U_\kappa$ places progressively greater emphasis on state--action pairs with larger intrinsic-to-extrinsic ratios $\frac{c_{s,a}}{d_{s,a}}$. 
The next two facts show that our objective function can indeed be specialized to obtain the objective functions associated with Problems $\textbf{P}_{\mathrm{Avg}}$ and $\textbf{P}_{\mathrm{Worst}}$ commonly considered in prior work discussed earlier. 

\begin{fact} [$\kappa=2$: Average-case connection]
\label{fact:kappa2}
    Let $c_{s,a} = \sqrt{1-\sum_{s'} P(s'\mid s,a)^2}$ and $\kappa=2$. 
    $$U_2 = SA *V_{\mathrm{Avg}}$$
    This recovers the average-case objective given by \eqref{eq:avg_problem_T_distance}.
\end{fact}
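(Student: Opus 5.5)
The plan is a direct substitution into the definition \eqref{eq:Ukappa_definition}, followed by a comparison with $V_{\mathrm{Avg}}$ and an appeal to Lemma~\ref{lma:l2_expected_error}. One notational point needs care first. In this Fact, $c_{s,a}$ denotes the \emph{square root} of the intrinsic complexity from the Motivating Scenario. So I write $\bar c_{s,a} := 1-\sum_{s'}P(s'\mid s,a)^2$ for the complexity appearing in $V_{\mathrm{Avg}}$, and $c_{s,a}=\sqrt{\bar c_{s,a}}$ for the weight in $U_2$. This is well defined because $\bar c_{s,a}\in[0,1)$.

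First I set $\kappa=2>1$ in \eqref{eq:Ukappa_definition}. This gives
\[
U_2(\mathbf d)=\sum_{s,a}\frac{c_{s,a}^2}{1-2}\,d_{s,a}^{-1}=-\sum_{s,a}\frac{c_{s,a}^2}{d_{s,a}}=-\sum_{s,a}\frac{\bar c_{s,a}}{d_{s,a}},
\]
using $c_{s,a}^2=\bar c_{s,a}$. Comparing with $V_{\mathrm{Avg}}(\bar{\mathbf c},\mathbf d)=-\frac{1}{SA}\sum_{s,a}\bar c_{s,a}/d_{s,a}$, I get $U_2(\mathbf d)=SA\cdot V_{\mathrm{Avg}}(\bar{\mathbf c},\mathbf d)$ for every $\mathbf d\in(0,1)^{S\times A}$. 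In particular this holds at $\mathbf d=\mathbf d^\pi$ for every $\pi\in\Pi$.

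Second, to justify the claim that this ``recovers'' \eqref{eq:avg_problem_T_distance}, I invoke Lemma~\ref{lma:l2_expected_error} with the squared $\ell_2$ distance. For an ergodic $\pi$ and large $n$ it gives $n\,\mathbb E[D_{\ell_2}(P(\cdot\mid s,a),\widehat P^\pi_n(\cdot\mid s,a))]\approx \bar c_{s,a}/d^\pi_{s,a}$. Averaging over $(s,a)$ yields
\[
\frac{1}{SA}\sum_{s,a}\mathbb E[D_{\ell_2}]\approx -\frac{1}{n}V_{\mathrm{Avg}}(\bar{\mathbf c},\mathbf d^\pi)=-\frac{1}{n\,SA}\,U_2(\mathbf d^\pi).
\]
Because $n\,SA>0$ is a constant independent of $\pi$, minimizing the average-case error over $\Pi$ is (asymptotically) the same as maximizing $U_2(\mathbf d^\pi)$. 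Hence $\textbf{P}$ with $V=U_2$ coincides with $\textbf{P}_{\mathrm{Avg}}$.

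The algebra is routine. The only delicate point is the second step: the equivalence of argmins is exact only for the asymptotic surrogate supplied by Lemma~\ref{lma:l2_expected_error}, not for the finite-$n$ expectation. I would therefore state the recovery at the level of the surrogate objective $V_{\mathrm{Avg}}$, which is how the problem was formulated, and not claim exact finite-sample equality.
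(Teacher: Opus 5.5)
Your proposal is correct and follows essentially the paper's approach. The paper gives no separate proof of this Fact: it follows from setting $\kappa=2$ in \eqref{eq:Ukappa_definition}, so that $c_{s,a}^2=1-\sum_{s'}P(s'\mid s,a)^2$, and comparing with $V_{\mathrm{Avg}}$, whose link to \eqref{eq:avg_problem_T_distance} the paper already derives from Lemma~\ref{lma:l2_expected_error}. Your separation of $\bar c_{s,a}$ from $c_{s,a}=\sqrt{\bar c_{s,a}}$, and your caveat that the equivalence holds only for the asymptotic surrogate, are both appropriate.
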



\begin{fact}[$\kappa\to\infty$: Worst-case connection]
\label{fact:kappa_inf} 
Let $c_{s,a} = 1 - \sum_{s'} \bigl(P(s'\mid s,a)\bigr)^2.$
As $\kappa \to \infty$, the maximizer of $U_\kappa(\mathbf{d})$ converges to a minimizer of the worst-case transition estimation error. 
$$\lim_{\kappa \to \infty} U_\kappa = \frac{V^\kappa_{\mathrm{Worst}}}{\kappa-1}$$
It recovers the worst-case objective in \eqref{eq:minimax_problem_T_distance} in the limit.
\end{fact}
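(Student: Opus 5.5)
The plan is to read the statement as an $L_\kappa$-norm to $L_\infty$-norm limit, after a monotone renormalization of $U_\kappa$. The literal expression $\lim_{\kappa\to\infty}U_\kappa = V^\kappa_{\mathrm{Worst}}/(\kappa-1)$ has a $\kappa$ on both sides, so I would make precise the following version: with the right $\kappa$-dependent rescaling, $U_\kappa$ behaves like $-(-V_{\mathrm{Worst}})^\kappa/(\kappa-1)$, and its maximizers converge to minimizers of $\max_{s,a} c_{s,a}/d_{s,a}$. I assume $c_{s,a}>0$ for all $(s,a)$, and that the feasible occupancy set $\mathcal{D}^{(P)}$ is compact and contains a point with all coordinates strictly positive (an ergodic policy). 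Write $x_{s,a}:=c_{s,a}/d_{s,a}$.

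\textbf{Step 1: a monotone rewriting.} For $\kappa>1$,
\[
U_\kappa(\mathbf d) = -\frac{1}{\kappa-1}\sum_{s,a} d_{s,a}\, x_{s,a}^{\kappa}.
\]
Define $F_\kappa(\mathbf d) := \bigl((\kappa-1)(-U_\kappa(\mathbf d))\bigr)^{1/\kappa} = \bigl(\sum_{s,a} d_{s,a} x_{s,a}^\kappa\bigr)^{1/\kappa}$. Since $u\mapsto -((\kappa-1)(-u))^{1/\kappa}$ is strictly increasing on $(-\infty,0)$, maximizing $U_\kappa$ over $\mathcal{D}^{(P)}$ is equivalent to minimizing $F_\kappa$. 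Moreover $U_\kappa = -F_\kappa^\kappa/(\kappa-1)$. This identity is exactly the form in the statement once $F_\kappa$ is replaced by its limit $-V_{\mathrm{Worst}}$.

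\textbf{Step 2: pointwise and locally uniform convergence.} With $W(\mathbf d):=\max_{s,a} x_{s,a} = -V_{\mathrm{Worst}}(\mathbf c,\mathbf d)$, I would bound $F_\kappa$ on both sides. Keeping only the maximizing term gives the lower bound, and bounding every term by $W^\kappa$ with $d_{s,a}\le 1$ gives the upper bound:
\[
\bigl(\min_{s,a} d_{s,a}\bigr)^{1/\kappa}\, W(\mathbf d) \;\le\; F_\kappa(\mathbf d) \;\le\; (SA)^{1/\kappa}\, W(\mathbf d).
\]
On the compact set $\mathcal{D}^{(P)}_\eta$ (all $d_{s,a}\ge 2\eta$), both prefactors tend to $1$ uniformly, and $W$ is bounded there by $c_{\max}/(2\eta)$. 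Hence $F_\kappa\to W$ uniformly on $\mathcal{D}^{(P)}_\eta$.

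\textbf{Step 3: argmax convergence; the main obstacle is the boundary.} I must show that maximizers of $U_\kappa$ stay inside a fixed $\mathcal{D}^{(P)}_\eta$ for all large $\kappa$.

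Let $W^\star := \min_{\mathcal{D}^{(P)}} W$, which is finite because an interior feasible point exists. Choose an interior $\bar{\mathbf d}$; by Step 2 its values satisfy $F_\kappa(\bar{\mathbf d}) \le (SA)^{1/\kappa} W(\bar{\mathbf d}) \le 2W(\bar{\mathbf d})$ for large $\kappa$. On the other hand, if some coordinate satisfies $d_{s,a}<2\eta$, then keeping only that term,
\[
F_\kappa(\mathbf d) \;\ge\; d_{s,a}^{1/\kappa}\,\frac{c_{s,a}}{d_{s,a}} \;=\; c_{s,a}\, d_{s,a}^{1/\kappa-1} \;\ge\; c_{\min}\,(2\eta)^{-1/2}
\qquad\text{for } \kappa\ge 2.
\]
Choosing $\eta$ small enough that $c_{\min}(2\eta)^{-1/2} > 2W(\bar{\mathbf d})$ makes every such point worse than $\bar{\mathbf d}$, so all minimizers of $F_\kappa$ lie in $\mathcal{D}^{(P)}_\eta$. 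The same choice of $\eta$ also ensures that the minimizers of $W$ lie in $\mathcal{D}^{(P)}_\eta$.

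The conclusion then follows from the standard argument for uniform convergence on a compact set. Let $\mathbf d_\kappa$ be a maximizer of $U_\kappa$ and $\mathbf d^\star\in\arg\min W$. Then
\[
W(\mathbf d_\kappa) \le F_\kappa(\mathbf d_\kappa)+o(1) \le F_\kappa(\mathbf d^\star)+o(1) = W^\star + o(1).
\]
So every limit point of $(\mathbf d_\kappa)$ minimizes the worst-case error. In addition $\min F_\kappa\to W^\star$, which yields the value statement: $U_\kappa = -\bigl(W^{\star}+o(1)\bigr)^\kappa/(\kappa-1)$ at optimizers.
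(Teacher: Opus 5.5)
The paper gives no proof of this Fact. It is only asserted; the appendices prove Lemma~\ref{lma:l2_expected_error} and Theorem~\ref{thm:general_regret_bound}, nothing more. The displayed identity is also not well posed as written: $\kappa$ appears on both sides, and $V_{\mathrm{Worst}}^\kappa$ is a power of a negative number. The implicit reasoning is the heuristic that $\sum_{s,a} d_{s,a}(c_{s,a}/d_{s,a})^\kappa$ is dominated by its largest term.

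Your proposal is correct. It turns that heuristic into a proof of a precise statement and improves on it in two ways.

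\textbf{The $\kappa$-th root.} Taking the root makes the weight of the dominant term enter as $d_{s^\star,a^\star}^{1/\kappa}\to 1$. Without the root, the heuristic is off by a factor $d_{s^\star,a^\star}\in[2\eta,1]$ that does not vanish. This is why your value statement is the right form: it yields $(\kappa-1)\max U_\kappa/(W^\star)^\kappa\in[-1,-2\eta]$, not asymptotic equality with $-(W^\star)^\kappa/(\kappa-1)$.

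\textbf{The confinement step.} You choose a $\kappa$-independent $\eta$ such that all maximizers of $U_\kappa$ over $\mathcal D^{(P)}$ and all minimizers of $W$ lie in $\mathcal D^{(P)}_\eta$. This is what delivers convergence of the maximizers. It also shows that, for small $\eta$, the $\eta$-restricted problem of Theorem~\ref{thm:general_regret_bound} coincides with the unrestricted one, which the paper never addresses.

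Two small simplifications are available. First, because $\sum_{s,a} d_{s,a}=1$, $F_\kappa$ is a weighted power mean, so $F_\kappa\le W$ directly. This removes the $(SA)^{1/\kappa}$ factor and the ``large $\kappa$'' requirement in Step~3. Second, your assumption $c_{s,a}>0$ fails for deterministic transitions, but it can be dropped. Impose $d_{s,a}\ge 2\eta$ only on pairs with $c_{s,a}>0$; with the convention that terms with $c_{s,a}=0$ equal $0$, neither $F_\kappa$ nor $W$ depends on the remaining coordinates.
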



Next, we note that $U_\kappa(d)$, for any fixed $\kappa \ge 1$, remains a smooth and concave objective which is differentiable on the interior of the simplex and admits a continuous gradient. In fact, the gradient of $U_\kappa$ admits a simple
closed-form expression
\[
\bigl(\nabla U_\kappa(\mathbf{d})\bigr)_{s,a}
=
\left(\frac{c_{s,a}}{d_{s,a}}\right)^{\kappa}.
\]
 This characterization of the gradient enables the efficient design of gradient-ascent algorithms. 
Furthermore, it allows us to specify the domain over which the gradient remains bounded.
For any small $\eta \in (0,\frac{1}{2SA})$, let $\mathcal D^{(p)}_\eta$ denote the set of invariant
state--action occupancy measures $d \in \Delta(\mathcal S \times \mathcal A)$ that are induced by randomized policies satisfying a uniform $2\eta$ lower bound, i.e. 
\begin{equation}\label{eq:contrainedD_def}\small
    \mathcal{D}^{(p)}_\eta
:=
\Bigl\{
d \in \Delta(\mathcal S \times \mathcal A) :
\forall s \in \mathcal S,\;
\sum_{a \in \mathcal A} d_{s,a}
=
\sum_{(s',a)\in\mathcal S\times\mathcal A}
p(s \mid s',a)\, d_{s',a};
\quad
\forall (s,a), 
d_{s,a} \ge 2\eta
\Bigr\}.
\end{equation}
This restriction rules out degenerate allocations with vanishing occupancy to ensure that $U_\kappa(d)$ has Lipschitz-continuous gradients over the feasible domain (see Lemma~\ref{lem:smooth_Ukappa} in Appendix~\ref{appendix: proof_thm_general_regret_bound}).

In the next subsection, we introduce an active exploration algorithm that efficiently learns to maximize the objective $U_\kappa$. In particular, our proposed algorithm is based on the Frank-Wolfe algorithm, a generalized gradient-descent algorithm for smooth convex functions, to optimize $U_\kappa$ over the constraint set  $\mathcal{D}^{(p)}_\eta$ given in \eqref{eq:contrainedD_def}. 
 We then use the Lipschitz continuity of the gradients to obtain tight regret bounds.

%% file: ProposedMethod-1.tex
\subsection{$\kappa$-Explorer Algorithm}
\label{ssec:algorithm}

We propose $\kappa$-Explorer, an active exploration algorithm that follows an FW–style optimization procedure applied to the objective function $U_\kappa$.
The algorithm takes as input the curvature parameter $\kappa$, which specifies the target objective, as well as the interior constraint parameter $\eta $ used to define the restricted occupancy set, and the confidence level $\delta$.

\SetKwComment{Comment}{/* }{ */}

\begin{algorithm}
\caption{$\kappa$-Explorer} \label{alg:FW-general}
\KwIn{Curvature $\kappa$, constraint parameter $\eta >0$, confidence level $\delta$}
\KwOut{Empirical estimate $\widehat{P}_{t_M}$}
Initialize $T_{s,a}(1)=0$ and $\widehat{P}_{1}(\cdot \mid s, a) = \frac{1}{S}$ for all $(s,a) \in \mathcal{S}\times\mathcal{A}$\;

\For{$m = 1,2,\ldots,M-1$  \Comment*[r]{M-1 episodes} }{ 
    Update $\widehat{c}_{s, a}(t_m)$, compute UCB $\widehat{c}^{\,\kappa+}_{t_m}$ using \eqref{eq:c_ucb} and confidence set $\mathcal{B}(t_m)$ using \eqref{eq:Pball_confset} with $\delta$\;
    
    Compute direction via a planning oracle
    $\psi_{m+1}
    =
    \arg\max_{\substack{d: d\in\mathcal{D}_{\eta }^{(\widetilde{p})}\\ \text{and } \widetilde{p}\in\mathcal{B}_{t_m}
    }}
    \left\langle
    \frac{\widehat{c}^{\,\kappa+}_{t_m}}{(T_{s,a}^+(t_m))^\kappa},\,d
    \right\rangle$ \;
    
    Derive policy
    $\pi_{m+1}(a\mid s)
    =
    \frac{\psi_{m+1}(s,a)}{\sum_{b\in\mathcal{A}}\psi_{m+1}(s,b)}$ \;

    Execute $\pi_{m+1}$ for $\tau_m$ steps,  update $T_{s, a}(t_{m+1})$ and empirical model $\widehat{P}_{t_{m+1}}$\;
}
\Return{$\widehat{P}_{t_M}$}
\end{algorithm}

The algorithm proceeds in episodes.
Let $t_m$ denote the time index at which episode $m$ begins, and let $\tau_m$ denote the length of episode $m$.
At the beginning of each episode, the algorithm updates the estimate $\widehat{c}_{s, a}(t)$ of the intrinsic complexity based on the empirical transition estimator $\widehat P_t$.
In addition, we construct upper confidence bounds for $c_{s,a}^\kappa$ and confidence sets $\mathcal{B}(t)$ for the transition kernel according to \eqref{eq:c_ucb} and \eqref{eq:Pball_confset}, respectively. 
With these estimates, the algorithm computes a directional occupancy measure by maximizing the estimated gradient of $U_\kappa$ over a restricted domain $\mathcal{D}_\eta^{(\tilde p)}$, with the transition model $\tilde p$ ranging over the confidence set $\mathcal{B}(t)$.
This direction-finding step corresponds to an optimistic planning oracle and can be solved with an extended LP.
This step can be written as
\begin{equation} \notag
    \psi_{m+1}
    = \arg\max_{\substack{d: d\in\mathcal{D}_{\eta }^{(\widetilde{p})}\\ \text{and } \widetilde{p}\in\mathcal{B}_{t_m}
    }}
    \left\langle
    \nabla U_\kappa(\widehat{d}(t_m)),\,d
    \right\rangle 
    = \arg\max_{\substack{d: d\in\mathcal{D}_{\eta }^{(\widetilde{p})}\\ \text{and } \widetilde{p}\in\mathcal{B}_{t_m}
    }}
    \left\langle
    \frac{\widehat{c}^{\,\kappa+}_{t_m}}{(T_{s,a}^+(t_m))^\kappa},\,d
    \right\rangle,
\end{equation}
where $T^+_{s, a}(t) = \max(T_{s, a}(t), \epsilon)$, with some small $\epsilon>0$ to address cold-start issues. The empirical state–action visitation frequency at time t is then given by $\widehat{d}(t) = (\frac{T_{s, a}^+(t)}{t})_{(s, a)\in \mathcal{S} \times \mathcal{A}}$.
A policy $\pi_{m+1}$ is then derived from the direction occupancy $\psi_{m+1}$ and executed for $\tau_m$ steps, yielding updated empirical occupancy measures and transition model estimates.

At a conceptual level, $\kappa$-Explorer performs gradient ascent on the objective $U_\kappa$ by treating $\bigl(\frac{c_{s,a}}{\widehat{d}_{s,a}(t)}\bigr)^{\kappa}$ as a reward signal.
This mechanism naturally prioritizes state–action pairs that are under-sampled relative to their intrinsic complexity, while assigning diminishing returns to already well-explored pairs.
Importantly, this prioritization is implemented through trajectory-level planning rather than myopic action selection. By maximizing cumulative intrinsic reward, the algorithm efficiently steers the agent toward globally difficult regions of the state–action space. This combination of complexity-aware prioritization and trajectory-efficient planning underlies the strong empirical performance of $\kappa$-Explorer. Next, we establish regret guarantees for $\kappa$-Explorer.

%% file: Convergence.tex
\subsection{Regret Analysis}

In this section, we return to the motivating example in Subsection~\ref{Sec:MotivatingExample} to specify confidence bounds and confidence sets for the intrinsic estimation complexity measure and the model estimator. In particular, let us consider the empirical transition kernel defined in \eqref{eq:p_hat_def} and the corresponding empirical
(plug-in) estimator of the intrinsic complexity, defined as
$$\widehat c_{s,a}(t)
:=
1-\sum_{s'\in\mathcal S} \widehat P_t(s'\mid s,a)^2.$$

In each episode, we construct high-probability confidence bounds for both the intrinsic complexity and empirical transition model $\widehat P_t$.
\begin{fact}
\label{fact:uniform_conf_P_c_kappa}
Fix $\delta\in(0,1)$, and define the time-dependent confidence level
\(
\delta_t := \delta / \bigl(\tfrac{\pi^2}{3}SA\,t^2\bigr).
\)
Using this confidence schedule, we construct the following confidence quantities\footnote{When $T_{s,a}(t)=0$, we define $\widehat c^{\kappa,+}_{s,a}(t)=1$ and $b_{s,a}(t)=2$. In this case, $c_{s, a}^\kappa \le \widehat c^{\kappa,+}_{s,a}(t)$ holds trivially and $\mathcal B_{s,a}(t)=\Delta(\mathcal S)$, imposing no restriction before $(s,a)$ is visited.}:
\begin{itemize}

\item \textbf{Upper confidence bound on intrinsic complexity.}
For $T_{s,a}(t)\ge1$, define the deviation bound
$e_{s,a}(t)
:=
S\sqrt{\frac{\log(2S/\delta_t)}{2\,T_{s,a}(t)}}$ and the corresponding UCB on $c_{s, a}^\kappa$
\begin{equation}
\label{eq:c_ucb}
\widehat c^{\kappa,+}_{s,a}(t)
:=
\min\{1,\ (\widehat c_{s,a}(t)+e_{s,a}(t))^\kappa\},
\end{equation}

\item \textbf{Confidence set for the transition kernel.}
For $T_{s, a}(t) \ge 1$, define the $\ell_1$ confidence radius

$b_{s,a}(t)
:=
\min\{2, \sqrt{\frac{2\log(1/\delta_t)}{T_{s,a}(t)}}\}$
and the associated confidence set
\begin{equation}
\label{eq:Pball_confset}
\mathcal B_{s,a}(t)
:=
\Bigl\{\widetilde P(\cdot\mid s,a)\in\Delta(\mathcal S):
\|\widetilde P(\cdot\mid s,a)-\widehat P_t(\cdot\mid s,a)\|_1
\le b_{s,a}(t)\Bigr\}.
\end{equation}

\end{itemize}
Then, with probability at least $1-\delta$, the following statements hold
simultaneously for all $(s,a)\in\mathcal S\times\mathcal A$ and all $t\ge1$:
\[
P(\cdot\mid s,a) \in \mathcal B_{s,a}(t),
\qquad
c_{s,a}^\kappa \le \widehat c^{\kappa,+}_{s,a}(t).
\]
\end{fact}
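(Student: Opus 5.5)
The approach is a union bound over state--action pairs and over sample counts, with fixed-count concentration inequalities for each pair. The one subtlety is that $T_{s,a}(t)$ is random and depends on the adaptive policy. I would use the standard ``stack of samples'' construction to handle this. For each $(s,a)$, imagine an infinite i.i.d.\ sequence $X^{s,a}_1,X^{s,a}_2,\dots\sim P(\cdot\mid s,a)$, where the $k$-th visit to $(s,a)$ reveals $X^{s,a}_k$ as next state. This produces the same law for the whole trajectory. Then $\widehat P_t(\cdot\mid s,a)$ equals the empirical distribution $\widehat P^{(k)}$ of the first $k=T_{s,a}(t)$ stacked samples. So it suffices to prove, for every fixed $(s,a)$ and every fixed $k\ge1$, a concentration event at level $\delta_k$. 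The case $T_{s,a}(t)=0$ is trivial by the footnote.

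Because $T_{s,a}(t)\le t$ and $\delta_t$ is decreasing in $t$, we have $\log(1/\delta_t)\ge\log(1/\delta_{k})$ whenever $k=T_{s,a}(t)$. Hence the radii used at time $t$ dominate those at fixed count $k$, and the fixed-count events imply the statement for all $t$ simultaneously. The error budget is
\[
\sum_{s,a}\sum_{k\ge1}2\delta_k=2SA\cdot\frac{3\delta}{\pi^2 SA}\cdot\frac{\pi^2}{6}=\delta,
\]
so I allot $\delta_k$ to the kernel event and $\delta_k$ to the complexity event for each $(s,a,k)$.

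\emph{Kernel confidence set.} For fixed $k$, I would apply the $\ell_1$ deviation inequality of \citet{weissman2003inequalities}:
\[
\Pr\bigl(\|\widehat P^{(k)}-P(\cdot\mid s,a)\|_1\ge\varepsilon\bigr)\le(2^S-2)e^{-k\varepsilon^2/2}.
\]
With $\varepsilon=b_{s,a}$, and noting that the truncation at $2$ is harmless since $\|\cdot\|_1\le2$ always, this yields $P(\cdot\mid s,a)\in\mathcal B_{s,a}$. I expect this to be the main obstacle: the combinatorial factor $2^S$ is not absorbed by the stated radius $\sqrt{2\log(1/\delta_t)/T}$. I would first try to absorb it into the constants of $\delta_t$. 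If that fails, I would state the radius as $\sqrt{2\log(2^S/\delta_t)/T}$, which changes only logarithmic factors downstream.

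\emph{Complexity UCB.} Write $\widehat P=\widehat P^{(k)}$ and $P=P(\cdot\mid s,a)$, and decompose
\[
c_{s,a}-\widehat c=\sum_{s'}\bigl(\widehat P(s')^2-P(s')^2\bigr)=\sum_{s'}(\widehat P(s')-P(s'))^2+2\sum_{s'}P(s')(\widehat P(s')-P(s')).
\]
The second term is an average of $k$ i.i.d.\ variables $2P(X_i)-2\sum_{s'}P(s')^2$ with range in an interval of length $2$. Hoeffding therefore bounds it by $2\sqrt{\log(2/\delta')/(2k)}$. For the first term, coordinatewise Hoeffding with a union over $s'$ gives $|\widehat P(s')-P(s')|\le\sqrt{\log(2S/\delta')/(2k)}$. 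The first term is then at most $S\log(2S/\delta')/(2k)$, and it is dominated by a term of order $\sqrt{\cdot}$ whenever $e_{s,a}\le1$. In both terms, $\delta'$ is the share of the per-$(s,a,k)$ budget $\delta_k$ allotted to that term.

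Combining the two, $c_{s,a}\le\widehat c+S\sqrt{\log(2S/\delta_k)/(2k)}=\widehat c+e_{s,a}$ once $S\ge3$, which I would check carefully, adjusting constants if needed. When $e_{s,a}>1$ the bound is trivial, since $c_{s,a}\le1$. Finally, $x\mapsto x^\kappa$ is monotone on $[0,\infty)$ and $c_{s,a}^\kappa\le1$, so $c_{s,a}^\kappa\le\min\{1,(\widehat c+e_{s,a})^\kappa\}=\widehat c^{\kappa,+}_{s,a}$, which completes the argument on the intersection of all events.
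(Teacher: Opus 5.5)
\textbf{The kernel confidence set is a genuine gap, and the statement as printed cannot be proved.} Your first option, absorbing the factor $2^S$ into $\delta_t$, cannot succeed, and the reason is not bookkeeping. With the radius $b_{s,a}(t)=\min\{2,\sqrt{2\log(1/\delta_t)/T_{s,a}(t)}\}$, the claim $P(\cdot\mid s,a)\in\mathcal B_{s,a}(t)$ is false for large $S$. Take $A=1$, $4\mid S$, and every $P(\cdot\mid s,a)$ uniform on $\mathcal S$. After $k$ visits, $\widehat P_t(\cdot\mid s,a)$ is supported on at most $k$ states, so $\|\widehat P_t(\cdot\mid s,a)-P(\cdot\mid s,a)\|_1\ge 2(1-k/S)$. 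This is at least $3/2$ at $k=S/4$. Visits to a fixed state occur independently with probability $1/S$ per step, so this count is reached before $t=S^2/2$ with probability at least $1-e^{-\Omega(S)}$. At such a time the stated radius is at most $\sqrt{8\log(\pi^2S^5/(12\delta))/S}\to0$. Hence the event fails with probability tending to one, not at most $\delta$. The $\ell_1$ deviation of an empirical distribution genuinely scales like $\sqrt{S/T}$, so an $S$-dependent radius is unavoidable.

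Your fallback $b_{s,a}(t)=\min\{2,\sqrt{2\log(2^S/\delta_t)/T_{s,a}(t)}\}$ is the correct repair. With it, Weissman's inequality gives failure probability below $\delta_k$ for each $(s,a,k)$, and your budget closes. But this proves a corrected Fact, not the printed one. The paper gives no argument of its own beyond citing \citet{tarbouriech2020active, shekhar2020adaptive}, so nothing there avoids this either.

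\textbf{The rest of your plan is sound.} This covers the stack-of-samples construction and the union over counts. It also covers using $T_{s,a}(t)\le t$ and the monotonicity of $\delta_t$ to pass from count-$k$ events to all $t$, and the budget $\sum_{s,a}\sum_k 2\delta_k=\delta$.

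For the complexity UCB, note that $e_{s,a}$ is fixed by the statement, so ``adjusting constants'' is not available. Your decomposition also leaves a delicate check at small $S$. A shorter route avoids this. Since $\sum_{s'}(\widehat P(s')+P(s'))=2$, you have $|\widehat c-c|=\bigl|\sum_{s'}(\widehat P(s')-P(s'))(\widehat P(s')+P(s'))\bigr|\le 2\max_{s'}|\widehat P(s')-P(s')|$. Coordinatewise Hoeffding with a union over $s'$ at level $\delta_k$ then gives $c_{s,a}\le\widehat c+2\sqrt{\log(2S/\delta_k)/(2k)}\le\widehat c+e_{s,a}$ for every $S\ge2$; the case $S=1$ is trivial. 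So this half of the Fact holds exactly as printed.
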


Fact~\ref{fact:uniform_conf_P_c_kappa}, whose proof follows
\citet{tarbouriech2020active, shekhar2020adaptive}, ensures that the true transition
dynamics and intrinsic complexities are uniformly contained within the
constructed confidence sets with high probability.

\medskip
We are now ready to establish the convergence guarantees of the \Cref{alg:FW-general}.

\begin{theorem}
\label{thm:general_regret_bound}
Consider the motivating scenario in Subsection~\ref{Sec:MotivatingExample}. Let the episode lengths satisfy 
$\tau_m = \tau_1 m^2$, and let the corresponding episode start times be $t_m  =  \tau_1 \frac{(m-1)m(2m-1)}{6} + 1$,
where $\tau_1$ is the length of the first episode.
Let $d_\kappa^\star \in \arg\max_{d \in \mathcal{D}^{(P)}_\eta} U_\kappa(d)$ denote an optimal solution to the $\kappa$-parameterized exploration problem.
Then, $\kappa$-Explorer (\Cref{alg:FW-general}) satisfies with high probability
\[
U_\kappa(\widehat{d}(t_M)) - U_\kappa(d_\kappa^\star)
= \widetilde{\mathcal{O}}(t_M^{-1/3}).
\]
\end{theorem}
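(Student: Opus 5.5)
The plan follows the standard analysis of Frank--Wolfe with inexact linear oracles and stochastic tracking, as in \citet{tarbouriech2019active}. Define the suboptimality $\rho_m := U_\kappa(d_\kappa^\star) - U_\kappa(\widehat d_m)$, with $\widehat d_m := \widehat d(t_m)$. The counts combine across episodes, which gives the convex-combination update
\[
\widehat d_{m+1} = (1-\beta_m)\widehat d_m + \beta_m \widehat\psi_{m+1},
\qquad
\beta_m = \tau_m/(t_{m+1}-1),
\]
where $\widehat\psi_{m+1}$ is the empirical state--action frequency within episode $m$. I work throughout on the confidence event of Fact~\ref{fact:uniform_conf_P_c_kappa}. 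I also assume a burn-in property: after some episode $m_\delta$, every $\widehat d_{s,a}(t_m)\ge\eta$ with high probability. This holds because each executed policy comes from $\mathcal D_\eta$, so every pair has stationary mass at least $2\eta$; this is Lemma~5 of \citet{tarbouriech2019active}. On this region $U_\kappa$ has a diagonal Hessian with entries $-\kappa c_{s,a}^\kappa d_{s,a}^{-(\kappa+1)}$. It is therefore $C_\eta$-smooth with $C_\eta = O(\kappa c_{\max}^\kappa\eta^{-(\kappa+1)})$, and the gradient is bounded by $(c_{\max}/\eta)^\kappa$.

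The core step is a one-episode recursion. Smoothness gives
\[
\rho_{m+1}\le \rho_m - \beta_m\langle\nabla U_\kappa(\widehat d_m),\widehat\psi_{m+1}-\widehat d_m\rangle + C_\eta\beta_m^2 .
\]
Let $\psi^\star_{m+1}$ be the exact FW direction over $\mathcal D_\eta^{(P)}$, and let $\psi_{m+1}$ be the stationary occupancy of the executed policy, i.e.\ the oracle output. I insert both. Exact-FW optimality and concavity give $\langle\nabla U_\kappa(\widehat d_m),\psi^\star_{m+1}-\widehat d_m\rangle\ge\rho_m$. This yields
\[
\rho_{m+1}\le(1-\beta_m)\rho_m + C_\eta\beta_m^2+\beta_m\epsilon_{m+1}+\beta_m\Delta_{m+1},
\]
where $\epsilon_{m+1}$ is the oracle error and $\Delta_{m+1}$ the tracking error.

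I bound the two error terms separately.

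\emph{Oracle error.} On the confidence event, $P\in\mathcal B_{t_m}$, so $\psi^\star_{m+1}$ is feasible for the optimistic oracle. The oracle maximizes $\langle w_m,\cdot\rangle$ with $w_m=\widehat c^{\kappa,+}/\widehat d_m^{\,\kappa}$, hence $\langle w_m,\psi_{m+1}-\psi^\star_{m+1}\rangle\ge0$. Adding and subtracting $w_m$ reduces $\epsilon_{m+1}$ to $2\eta^{-\kappa}\sum_{s,a}(\widehat c^{\kappa,+}_{s,a}-c^\kappa_{s,a})$. The mean value theorem on $[0,1]$ bounds each summand by $\kappa e_{s,a}(t_m)$. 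Since $T_{s,a}(t_m)\ge\eta t_m$, this gives $\epsilon_{m+1}=\widetilde O(t_m^{-1/2})$.

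\emph{Tracking error.} I use Hölder's inequality with the gradient bound, and then a concentration inequality for empirical state--action frequencies of an ergodic chain run for $\tau_m$ steps. That inequality combines state-occupancy concentration via the spectral gap $\gamma_{\min}$ with Hoeffding's inequality for the action draws, plus a union bound. It gives $\Delta_{m+1}\le c_1\tau_m^{-1/2}+c_2\tau_m^{-1}$.

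Now set $\tau_m=\tau_1m^2$. Then $\beta_m\in[1/m,3/m]$ and $t_m=\Theta(m^3)$. All perturbation terms become $O(m^{-2})$ or smaller, with $\beta_m\tau_m^{-1/2}\sim m^{-2}$ the dominant one. The recursion becomes
\[
\rho_{m+1}\le(1-1/m)\rho_m+b_\delta/m^2 .
\]
Multiplying by $m$ and telescoping, $(m+1)\rho_{m+1}-m\rho_m\lesssim b_\delta/m$, so $\rho_M\lesssim(q\rho_q+2b_\delta\log M)/M$. Substituting $M\gtrsim (t_M/\tau_1)^{1/3}$ gives $\widetilde O(t_M^{-1/3})$. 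Note that $\rho_M\ge0$ is the negative of the quantity in the statement, so the statement's left-hand side is $-\rho_M$. The conclusion should be read as $\lvert U_\kappa(\widehat d(t_M)) - U_\kappa(d_\kappa^\star)\rvert=\widetilde O(t_M^{-1/3})$, which is what this gives provided $\widehat d(t_M)$ lies in or near the feasible set.

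I expect the main obstacle to be the tracking term $\Delta_{m+1}$. It requires uniform mixing control over all policies the oracle can output, namely a spectral gap $\gamma_{\min}>0$. It also requires arguing that each episode starts from an arbitrary state, so the concentration bound must hold from any initial distribution. Finally, the high-probability statements over all episodes need a union bound with $\delta$ split across $m$. A secondary subtlety is that the optimistic $\psi_{m+1}$ is stationary for $\widetilde P_m$ rather than for $P$. It should therefore be defined, or corrected, as the true stationary occupancy of $\pi_{m+1}$, and the resulting discrepancy, of order $b_{s,a}(t_m)/\gamma_{\min}=\widetilde O(t_m^{-1/2})$, absorbed into $\epsilon_{m+1}$.
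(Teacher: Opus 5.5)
Your proposal is correct and follows the paper's proof essentially step for step. It uses the same smoothness-based Frank--Wolfe recursion and the same split into an oracle error (optimism plus the mean value theorem, using $T_{s,a}(t_m)\ge\eta t_m$) and a tracking error (spectral-gap state concentration plus Hoeffding for actions), then the same $\tau_m=\tau_1m^2$ specialization and telescoping; the caveats you flag are points the paper passes over silently rather than departures from its route, namely that the oracle output $\psi_{m+1}$ is stationary for $\widetilde P_m$ rather than $P$, that a union bound over episodes is needed, and that the stated gap has the opposite sign to what the proof bounds.
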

\paragraph{Proof sketch}
The proof follows an FW–style inductive argument based on the concavity and smoothness of $U_\kappa$. Using smoothness, we derive a one-step recursion for the suboptimality gap that decomposes into a contraction term, a quadratic step-size error, and two additional terms capturing (i) tracking error between the stationary occupancy and its empirical realization, and (ii) optimization error from optimistic planning under unknown dynamics. The tracking error is controlled via concentration bounds for empirical occupancy measures, while the optimization error is bounded using the confidence sets for the transition model. Combining these bounds with the prescribed episode-length and step-size schedules yields the stated convergence rate.
The full proof of \Cref{thm:general_regret_bound} is provided in Appendix~\ref{appendix: proof_thm_general_regret_bound}.

The theorem establishes that $\kappa$-Explorer converges to the optimal solution
of the $\kappa$-parameterized exploration problem.
Combined with Facts~\ref{fact:kappa2} and \ref{fact:kappa_inf}, which show that $U_\kappa$ recovers the average-case and minimax objectives in the appropriate regimes, this result implies that $\kappa$-Explorer is asymptotically optimal for transition estimation under both average-case and worst-case criteria, for the corresponding choices of $\kappa$.
From a practical perspective, however, directly implementing Algorithm~\ref{alg:FW-general} can be computationally demanding.
The algorithm relies on an optimistic planning oracle that, at each episode, jointly optimizes over policies and transition models within a confidence set.
Moreover, the episodic structure of Algorithm~\ref{alg:FW-general}, with progressively increasing episode lengths, introduces delays between policy updates.
These considerations motivate a more lightweight and fully online implementation that retains the core principles of the FW update while reducing computational overhead.